\documentclass[10pt,journal,compsoc]{IEEEtran}
\usepackage{amsmath,amsfonts}
\usepackage{bbm}
\usepackage{algorithmic}
\usepackage{array}
\usepackage{textcomp}
\usepackage{stfloats}
\usepackage[hyphens]{url}
\usepackage{verbatim}
\usepackage{graphicx}
\usepackage{authblk}
\usepackage{color}
\usepackage{longtable}
\usepackage{caption}
\usepackage{subcaption}
\usepackage{amsthm}
\usepackage{multirow}
\usepackage{multicol}

\usepackage{cite}

\usepackage{amsmath, amsfonts,graphicx,algorithm,algorithmic,mathrsfs,enumitem,xfrac}
\usepackage[ruled,vlined,linesnumbered,algo2e]{algorithm2e}
\usepackage[top=1.5cm, bottom=2.5cm, left=1.5cm, right=1.5cm]{geometry}
\usepackage[dvipsnames,svgnames,table]{xcolor}
\usepackage{tcolorbox}
\usepackage{booktabs}
\usepackage{color}
\usepackage{xcolor}
\usepackage{bbm}
\usepackage{xspace}

\usepackage{graphicx}
\hyphenation{op-tical net-works semi-conduc-tor IEEE-Xplore}
\newtheorem{theorem}{Theorem}

\def\BibTeX{{\rm B\kern-.05em{\sc i\kern-.025em b}\kern-.08em
    T\kern-.1667em\lower.7ex\hbox{E}\kern-.125emX}}
\usepackage{balance}

\usepackage[textwidth=1cm,textsize=tiny]{todonotes}

\usepackage{hyperref}

\newcommand{\vct}[1]{\ensuremath{\boldsymbol{#1}}}

\newcommand{\set}[1]{\ensuremath{\mathcal{#1}}}

\newcommand{\argmax}{\operatornamewithlimits{\arg\,\max}}

\newcommand{\argmin}{\operatornamewithlimits{\arg\,\min}}

\newcommand{\myparagraph}[1]{\noindent \textbf{#1}}

\newcommand{\ie}{i.e.,\xspace}
\newcommand{\eg}{e.g.,\xspace}

\newcommand{\wrt}{w.r.t.\xspace}

\newcommand{\edit}[1]{\textcolor{black}{#1}}


\newcommand{\anf}{NF\xspace}
\newcommand{\anfs}{NFs\xspace}
\newcommand{\rnf}{RNF\xspace}
\newcommand{\rnfs}{RNFs\xspace}


\newcommand{\pct}{PCT\xspace}
\newcommand{\pctat}{PCAT\xspace}
\newcommand{\rfat}{RCAT\xspace}
\newcommand{\rcat}{RCAT\xspace}
\newcommand{\mixmseat}{RCAT\xspace} 



\newcommand{\numepochs}{12\xspace}
\newcommand{\batchsize}{500\xspace}
\newcommand{\lr}{$10^{-3}$\xspace}

\newcommand{\fsrc}{\ensuremath{f^{\rm src}}\xspace}
\newcommand{\f}{\ensuremath{f}\xspace}
\newcommand{\fnew}{\ensuremath{f^{\rm new}}\xspace}
\newcommand{\fold}{\ensuremath{f^{\rm old}}\xspace}
\newcommand{\naive}{na\"ive\xspace}
\newcommand{\base}{baseline\xspace}

\newcommand{\trainset}{\ensuremath{\set D}\xspace}

\begin{document}

\title{Robustness-Congruent Adversarial Training for Secure Machine Learning Model Updates}

\author[1]{Daniele Angioni}
\author[2]{Luca Demetrio}
\author[1]{Maura Pintor}
\author[2]{Luca Oneto}
\author[2]{Davide Anguita,~\IEEEmembership{Senior Member,~IEEE}}
\author[1]{Battista~Biggio,~\IEEEmembership{Fellow,~IEEE}}
\author[1,2]{Fabio Roli,~\IEEEmembership{Fellow,~IEEE}}

\affil[1]{Department of Electrical and Electronic Engineering,
University of Cagliari, Italy}
\affil[2]{Department of Informatics, Bioengineering, Robotics and Systems Engineering, University of Genova, Italy}

\markboth{Journal of \LaTeX\ Class Files,~Vol.~18, No.~9, September~2020}%
{Regression-Free}

\IEEEtitleabstractindextext{
\begin{abstract} 
Machine-learning models demand periodic updates to improve their average accuracy, exploiting novel architectures and additional data. However, a newly updated model may commit mistakes the previous model did not make. Such misclassifications are referred to as \textit{negative flips}, experienced by users as a regression of performance. 
In this work, we show that this problem also affects robustness to adversarial examples,  hindering the development of secure model update practices. In particular, when updating a model to improve its adversarial robustness, previously ineffective adversarial attacks on some inputs may become successful, causing a regression in the perceived security of the system.
We propose a novel technique, named robustness-congruent adversarial training, to address this issue. It amounts to fine-tuning a model with adversarial training, while constraining it to retain higher robustness on the samples for which no adversarial example was found before the update. We show that our algorithm and, more generally, learning with non-regression constraints, provides a theoretically-grounded framework to train consistent estimators. 
Our experiments on robust models for computer vision confirm that both accuracy and robustness, even if improved after model update, can be affected by negative flips, and our robustness-congruent adversarial training can mitigate the problem, outperforming competing baseline methods.
\end{abstract}

\begin{IEEEkeywords}
Machine Learning, Adversarial Robustness, Adversarial Examples, Regression Testing
\end{IEEEkeywords}}

\maketitle

\IEEEdisplaynontitleabstractindextext

\IEEEpeerreviewmaketitle

\section{Introduction}

Many modern machine learning applications require frequent model updates to keep pace with the introduction of novel and more powerful architectures, as well as with changes in the underlying data distribution.
For instance, when dealing with cybersecurity-related tasks like malware detection, novel threats are discovered at a high pace, and machine learning models need to be constantly retrained to learn to detect them with high accuracy.
Another example is given by image tagging, in which image classification and detection models are used to tag pictures of users, and the variety of depicted objects and scenarios varies over time, requiring constant updates.
In both cases, as novel and more powerful machine learning architectures emerge, they are rapidly adopted to improve the average system performance; consider, for instance, the need for transitioning from convolutional neural networks to transformer-based architectures.

Within the aforementioned scenarios, the practice of delivering frequent model updates opens up a new challenge related to the maintenance of machine learning models and their performance as perceived by the end users.
The issue is that average accuracy is not elaborate enough to also account for sample-wise performance. 
In particular, even if average accuracy increases after an update, some samples that were correctly predicted by the previous model might be misclassified after the model update.
There is indeed no guarantee that a newly updated model with higher average accuracy will not commit any mistake on the samples that the previous model correctly predicted.
The samples that the previous model correctly predicted and became misclassified after the update have been referred to as \textit{Negative Flips} (NFs) in~\cite{yan2021positive}. Such mistakes are perceived by end users and practitioners as a \textit{regression} of performance, similarly to what happens in classical software development, where the term ``regression'' refers to the deterioration of performance after an update. For this reason, reducing negative flips when performing model updates can be considered as important in practice as improving the overall system accuracy.
To better understand the relevance of this issue, consider again the case of malware detection. 
Experiencing NFs in this domain amounts to having either previously known legitimate samples misclassified as false
positives, or previously-detected malware samples misclassified as legitimate, increasing the likelihood of infecting devices
in the wild.\footnote{\url{https://www.mandiant.com/resources/blog/churning-out-machine-learning-models-handling-changes-in-model-predictions}}
Similarly, for the case of image tagging, users might find some of their photos changing labels and potentially being mislabeled after a model update, resulting in a bad user experience.
Yan et al.~\cite{yan2021positive} have been the first to highlight this issue, and proposed an approach aimed at minimizing negative flips, 
referred to as \emph{Positive Congruent Training} (\pct). The underlying idea of their method is to include an additional knowledge-distillation loss while training the updated model to retain the behavior of the old model on samples that were correctly classified before the model update. This forces the updated model to reduce the number of errors on such samples (\ie the NFs), while also aiming to improve the average classification accuracy.

In this work, we argue that model updates may not only induce a perceived regression of classification accuracy via negative flips, but also a regression of other trustworthiness-related metrics, including \textit{adversarial robustness}.
Adversarial robustness is the ability of machine learning models to withstand \emph{adversarial examples}, \ie inputs carefully perturbed to mislead classification at test time~\cite{biggio2013evasion,szegedy2014intriguing}.
Recent progress has shown that adversarial robustness can be improved by adopting more recent neural network architectures and data augmentation techniques~\cite{croce2021robustbench}.
When updating the system with a more recent and robust model, one is expected to gain an overall increase in \textit{robust accuracy}, \ie the fraction of samples for which no adversarial example (crafted within a given perturbation budget) is found. However, similarly to the case of classification accuracy, previously ineffective adversarial attacks on some samples may be able to evade the newly updated model, thereby causing a perceived regression of robustness. In the remainder of this manuscript, we refer to these newly induced mistakes as \textit{robustness negative flips} (RNFs). We discuss the different types of regression in machine learning in Sect.~\ref{sect:regression}.

Our contribution is twofold. We are the first to show that the update of robust machine-learning models can cause a perceived regression of their robustness, meaning that the new model may be fooled by adversarial examples that the previous model was robust to.
Furthermore, we propose a novel technique named \emph{robustness-congruent adversarial training} (\rfat) to overcome this issue, presented in Sect.~\ref{sect:rct}.
As the name suggests, our methodology utilizes the well-known adversarial training (AT) procedure to update machine learning models by incorporating adversarial examples within their training data~\cite{madry2017towards}. In particular, we enrich AT by re-formulating the optimization problem with an additional non-regression penalty term that forces the model to retain high robustness on the training samples for which no adversarial example was found, minimizing the fraction of robustness negative flips (as well as the fraction of negative flips). Finally, we show that our technique is also theoretically grounded, demonstrating that learning with a non-regression constraint provides a statistically-consistent estimator, without even affecting the usual convergence rate of $O(\sfrac{1}{\sqrt{n}})$, where $n$ is the number of training samples.

Our experiments, reported in Sect.~\ref{sect:exp}, confirm the presence of regression when updating robust models, showing that state-of-the-art image classifiers~\cite{croce2021robustbench} with higher robustness than their predecessors are misled by some previously-detected adversarial examples.
We further show that, when updating models using \rfat, the accuracy and robustness of the updated models improve while containing both negative flips and robustness negative flips, outperforming competing baselines.
We discuss related work on backward compatibility of machine-learning models and continual learning in Sect.~\ref{sect:related}, and conclude the paper by discussing limitations of the current approach and future research directions in Sect.~\ref{sect:concl}.
The code to reproduce all experiments and results is available at: \url{https://github.com/pralab/robustness-congruent-advtrain}.

\section{Regression of Machine Learning Models}
\label{sect:regression}

\begin{figure*}[t]
    \centering
    \includegraphics[width=0.99\textwidth]{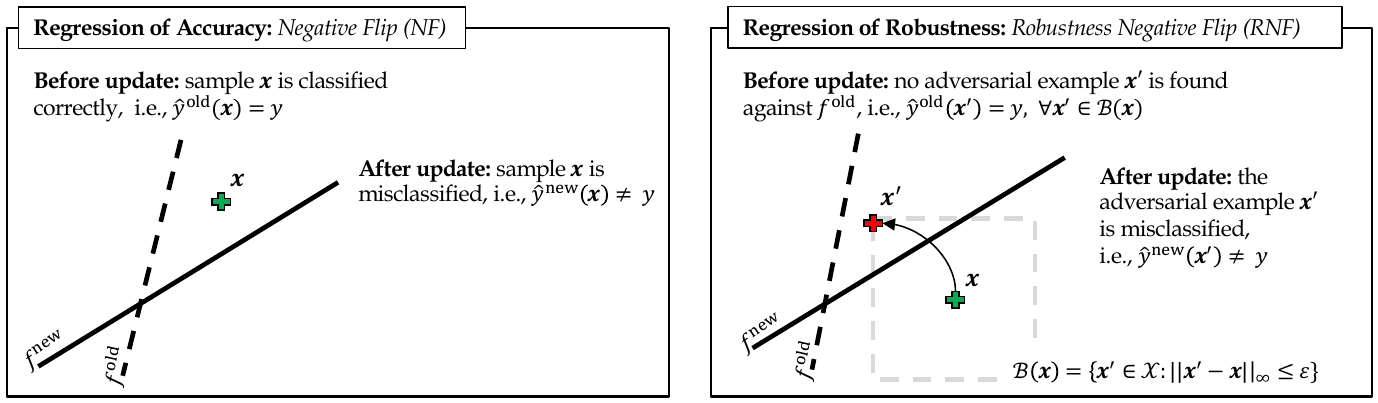}
    \caption{Regression modes in machine-learning model updates. \textit{Left}: Regression of accuracy induced by negative flips (NFs). When updating an old model \fold (dashed black line) with a new model \fnew (solid black line), a test sample $\vct x$ that was correctly classified by \fold may be misclassified by \fnew, causing an NF. 
     \textit{Right}: Regression of robustness induced by robustness negative flips (RNFs). In a different setting, the test sample $\vct x$ may still be correctly classified by \fnew. However, while no adversarial examples are found against \fold (since the perturbation domain $\set B(\vct x)$, represented by the dashed gray box around $\vct x$, never intersects the decision boundary of \fold),  an adversarial example $\vct x^\prime$ is found against \fnew, causing an RNF.}
    \label{fig:types_nf}
\end{figure*}

Before delving into the discussion about the types of regression that may be experienced when updating machine learning models, we define some basic notation that will be used throughout this manuscript.

\myparagraph{Notation.} Let us denote with $\set D = (\vct x_i, y_i)_{i=1}^n$ the training set, consisting of $n$ $d$-dimensional samples $\vct x_i \in \set X \subseteq \mathbb R^d$,\footnote{When data is normalized, as for images, typically $\vct x \in \set X = [0, 1]^d$.} along with their class labels $y_i \in \set Y = \{1, \ldots, c\}$. 
We similarly define the test set as $\set T = (\vct x_j, y_j)_{j=1}^m$, and assume that both datasets are sampled from the same, unknown probability distribution. Accordingly, a machine learning model can be represented as a function $f : \set X \mapsto \mathbb R^c$, which outputs a confidence value for each class (also referred to as \textit{logit}). For compactness, we will also use $f_k(\vct x)$ to denote the confidence value associated to class~$k$. The predicted class label can be thus denoted with $\hat{y}(\vct x) = \arg\max_{k \in \set Y} f_k(\vct x)$, where the dependency of $\hat{y}$ on $\vct x$ may be omitted in some cases to keep the notation uncluttered.
Training a model amounts to finding a function $f$ within a feasible set $\set F$ (\eg constrained by a fixed network architecture) that minimizes a loss $L(y, f(\vct x))$ on the training samples in $\set D$, typically using a convex loss that provides an upper bound on the classification error (\ie the zero-one loss), while penalizing complex solutions via a regularization term that prevents overfitting. This helps find solutions that achieve lower error on the test set $\set T$ and are thus expected to generalize better.

\subsection{Regression of Accuracy: Negative Flips (NFs)} 

Training a model by minimizing the aforementioned objective means finding a suitable set of parameters for \f that reduces the probability of misclassification over all samples. However, this formulation does not constrain the behavior of the model on specific samples or regions of the feature space. Thus, when updating an old model \fold with a new model \fnew that is trained on the same data $\set D$ but using a different network architecture, one may observe that \fnew yields a lower classification error on $\set T$ while misclassifying different samples than those that were misclassified by \fold. 
In other words, there might be some test samples $\vct x$ for which $\hat y^{\rm old}(\vct x) = y$ and $\hat y^{\rm new}(\vct x) \neq y$, being $y$ their true class. These samples are referred to as \textit{negative flips} (NFs), as they are responsible for worsening the performance of the new model when compared to the old one. Conversely, the samples that were misclassified by the old model but not by the new one are referred to as \textit{positive flips} as they improve the overall accuracy of \fnew on $\set T$.
More formally, we can define the NF rate (measured on the test set $\set T$) as suggested by Yan et al.~\cite{yan2021positive}:
\begin{equation}
    {\rm NF (\%)} = \frac{1}{m} \sum_{j=1}^m \mathbb I \left (\hat y_j^{\rm old} = y_j \land \hat y_j^{\rm new} \neq y_j \right) \, ,
\label{eq:nf}
\end{equation}
where $\mathbb I$ is the indicator function, which equals one only if the input statement holds true (and zero otherwise), $\land$ is the \textit{logical and} operator, and we make use of the compact notation $\hat y_j^{\rm old}$ and $\hat y_j^{\rm new}$ to denote respectively $\hat y^{\rm old}(\vct x_j)$ and $\hat y^{\rm new}(\vct x_j)$. The regression of accuracy induced by NFs after model update is also exemplified in Fig.~\ref{fig:types_nf} (\textit{left}).


\subsubsection{Positive-Congruent Training (PCT)}
\label{sect:regression.pct}

To reduce the presence of NFs after model update, and prevent the system users from experiencing a regression of accuracy when using the new model, Yan et al.~\cite{yan2021positive} proposed a technique named \textit{Positive-Congruent Training} (PCT).
The underlying idea of PCT is to add a knowledge-distillation term to the loss function optimized when training the new model \fnew. 
This term is referred to as \textit{focal distillation}, as it forces \fnew to produce outputs similar to those of \fold on the samples that the old model correctly classified. 
The PCT approach can be formally described as:
\begin{align}
\fnew \in \argmin_{f \in \mathcal{F}} \sum_{i=1}^{ n}  L(y_i, f(\vct x_i))  + \lambda \cdot L_{FD}(f(\vct x_i), \fold(\vct x_i)) \, ,
\label{eq:pct}
\end{align}
being $L$ a standard loss function (\eg the cross-entropy loss), $L_{FD}$  the focal-distillation loss, and $\lambda$ a trade-off hyperparameter. The focal-distillation loss takes the logits $f(\vct x_i)$ of the new model \f being optimized as inputs, along with the logits $\fold(\vct x_i)$ provided by the old model, and it is computed as:
\begin{align}
L_{FD}= \left (\alpha + \beta \cdot \mathbb I(\hat{y}^{\rm old}_i=y_i) \right) \cdot L_D(f(\vct x_i), \fold(\vct x_i)) \, ,
\label{eq:focal-loss}
\end{align}
where $\alpha$ and $\beta$ are two (non-negative) hyperparameters, and $L_D$ is the distillation loss. The hyperparameter $\alpha$ forces the distillation of the old model over the whole sample set, while $\beta$ is used to upweight the contribution of the samples that were correctly classified by \fold. In this manner, the new model tends to mimic the behavior of the old model where the latter performed correctly, reducing the potential mistakes induced by NFs. While in their work Yan et al.~\cite{yan2021positive} experimented with different distillation losses, we focus here on the most promising one, named \textit{focal distillation with logit matching} (FD-LM). This distillation loss simply measures the squared Euclidean distance between the logits of the model \f being optimized and those of the old model \fold:
\begin{equation}
    L_D(f(\vct x_i), \fold(\vct x_i)) = \frac{1}{2}   \| \f(\vct x_i) - \fold(\vct x_i) \|_2^2 .
\label{eq:distill}
\end{equation}

While PCT has empirically proven to reduce NFs, it has not been designed to deal with the regression of robustness.
Furthermore, its theoretical properties have not been analyzed, and it thus remains unclear whether it provides a sound, statistically-consistent estimator.

\subsection{Regression of Robustness: Robustness Negative Flips (RNFs)}
\label{sect:regression.rnfs}

After introducing the notion of regression of accuracy, induced by the presence of NFs after model update, we argue here that incremental changes to machine-learning models can also affect their security against \emph{adversarial examples}~\cite{biggio2013evasion,szegedy2014intriguing}, \ie carefully-crafted input perturbations aimed to cause misclassifications at test time.
More formally, adversarial examples are found by solving the following optimization:
\begin{equation}
\max_{\vct x^\prime \in \mathcal B(\vct x)} L(y, f(\vct x^\prime))  \, ,
\label{eq:advx}
\end{equation}
where $\mathcal{B}(\vct x) = \{\vct x^\prime \in \set X  : \| \vct x^\prime - \vct x\|_p \leq \varepsilon \}$ defines the perturbation domain,  $\|\cdot \|_p$ a suitable $\ell_p$ norm, and $\varepsilon$ the perturbation budget. In practice, the goal is to find a sample $\vct x^\prime \in \mathcal{B}(\vct x)$ that is misclassified by \f, \ie for which $\hat y(\vct x^\prime) \neq y$.

As shown in Fig.~\ref{fig:types_nf} (\textit{right}), the adversarial example $\vct x^\prime$ is optimized within an $\ell_\infty$-norm (box) constraint of radius~$\varepsilon$ centered on the source sample $\vct x$.
It is not difficult to see that, while no adversarial example can be found against the old model \fold (as the box constraint never intersects its decision boundary), the same does not hold for the new model \fnew, which is evaded by the adversarial example $\vct x^\prime$. Accordingly, when updating the old model \fold with \fnew, even if the overall robust accuracy may increase, \fnew might be evaded by some adversarial examples that were not found against \fold, causing what we call \emph{robustness negative flips} (RNFs).
More formally, we define the RNF rate as:
\begin{equation}
    {\rm RNF (\%)} = \frac{1}{m} \sum_{j=1}^m \mathbb I \left (\hat y^{\rm old}(\vct x_j^\prime) = y_j \land \hat y^{\rm new}(\vct x_j^{\prime \prime}) \neq y_j \right) \, ,
\label{eq:rnf}
\end{equation}
where $\vct x_j^{\prime}$ and $\vct x_j^{\prime \prime}$ are the adversarial examples obtained by solving Problem~\eqref{eq:advx} against \fold and \fnew, respectively. This means that adversarial examples are re-optimized against each model, and the statement $\hat y^{\rm old}(\vct x_j^\prime) = y_j$ holds true only if no adversarial example within the given perturbation domain $\set B(\vct x)$ is found against \fold,  \ie no sample in $\set B(\vct x)$ is misclassified by \fold, whereas it suffices to find one evasive sample against \fnew to conclude that $\hat y^{\rm new}(\vct x_j^{\prime \prime}) \neq y_j$. This makes measuring regression of robustness more complex, as it is defined over a perturbation domain $\set B(\vct x)$ around each input sample $\vct x$, rather than just on the set of input samples. However, as we will see in our experiments, it can be reliably estimated using state-of-the-art attack algorithms and best practices to optimize adversarial examples~\cite{carlini19-arxiv,pintor2021fast,croce2020reliable}.

To conclude, note that an input sample $\vct x$ may be correctly classified by \fold, and at the same time no corresponding adversarial example may exist, \ie no $\vct x^\prime$ evading \fold can be found within the given perturbation domain $\set B(\vct x)$, as shown in the right plot of Fig.~\ref{fig:types_nf}. After the update, it may happen that \fnew misclassifies the input sample $\vct x$. Thus, by definition, $\vct x$ coincides with the solution of Problem~\eqref{eq:advx}, \ie its adversarial example $\vct x^\prime$. This case will be accounted for as an NF and an RNF simultaneously. However, as discussed in our experiments, these \textit{joint} negative flips are typically very rare and overall negligible.


\section{Secure Model Updates via Robustness-Congruent Adversarial Training}
\label{sect:rct}


We present here our \textit{Robustness-Congruent Adversarial Training} (RCAT) approach to updating machine-learning models while keeping a low number of \anfs and \rnfs (Sect.~\ref{sect:rcat}).
We then demonstrate how RCAT, as well as PCT, provide statistically-consistent estimators in Sect.~\ref{subsec:Consistency}.

\subsection{Robustness-Congruent Adversarial Training} \label{sect:rcat}

We formulate RCAT as an extension of Problem~\eqref{eq:pct} that includes \textit{adversarial training}, \ie the optimization of adversarial examples during model training. This is a well-known practice used to improve adversarial robustness of machine-learning models~\cite{madry2017towards}. Before introducing RCAT, we propose a trivial extension of PCT with adversarial training, which we refer to as \textit{Positive-Congruent Adversarial Training} (PCAT).

\myparagraph{Positive-Congruent Adversarial Training (PCAT).} PCAT amounts to solving the following problem:
\begin{align}
\min_{f \in \mathcal{F}} \sum_{i=1}^{ n} \max_{\vct x_i^\prime \in \set B_i}  L(y_i, f(\vct x_i^\prime))  + \lambda \cdot L_{FD}(f(\vct x_i^\prime), \fold(\vct x_i^\prime)) \, ,
\label{eq:pct-at}
\end{align}
where $\set B_i$ is used to compactly denote $\set B(\vct x_i)$. This is a min-max optimization problem that aims to find the worst-case adversarial example $\vct x_i^\prime$ for each training sample $\vct x_i$, while upweighting the distillation loss on the adversarial examples that were not able to fool \fold.
The underlying idea is to preserve robustness on the samples for which no adversarial example was found against \fold, thereby reducing RNFs.
However, as we will show in our experiments, this technique is not very effective in preventing regression of robustness, even though it provides a reasonable baseline for comparison. In particular, the main problems that arise when trying to use PCAT are: (i) it is not easy to define an effective hyperparameter tuning strategy; and (ii) it is not possible to exploit an already-trained, updated, and more robust model directly.

\myparagraph{Robustness-Congruent Adversarial Training (RCAT).} With respect to the baseline idea of PCAT, we  define RCAT as:
\begin{align}
\min_{f \in \mathcal F} \sum_{i=1}^n \max_{\vct x^\prime_i \in \mathcal B_i} \gamma \cdot L(y_i, f(\vct x^\prime_i)) +  \alpha \cdot L_D(f({\vct x}_i^\prime), \fsrc({\vct x}^\prime_i)) \, + \nonumber \\  \beta \cdot \mathbb I(\hat{y}^{\rm old}(\vct x^\prime_i)=y_i) \cdot L_D(f({\vct x}^\prime_i), \fold({\vct x}^\prime_i)) \, .
\label{eq:rfat}
\end{align} 
This formulation presents two main changes with respect to PCAT (Problem~\ref{eq:pct-at}), to overcome the two aforementioned limitations of such method.
First, to facilitate hyperparameter tuning,  we redefine the range of the hyperparameters $\alpha,\beta \in [0,1]$, while fixing $\gamma=1-\alpha-\beta$, so that the three hyperparameters sum up to 1. We also remove the hyperparameter $\lambda$ as it is redundant.
Second, we use \fsrc instead of \fold in the $\alpha$-scaled term of the focal distillation loss (Eq.~\ref{eq:focal-loss}).
The reason is that one may want to update a model \fold with an already-trained \textit{source} model \fsrc that exhibits improved accuracy and robustness, while also reducing NFs and RNFs after update.
As \fsrc can be used to initialize \f before training via RCAT, it is not reasonable to enforce the behavior of \fold over the whole input space. We can indeed try to preserve the behavior of the improved \fsrc model over the whole input space, while enforcing that of \fold only on those regions of the input space in which no adversarial examples against \fold are found. This is especially convenient when dealing with robust models, as they are usually trained with complicated adversarial training and data augmentation variants, resulting in a significant increase in computational complexity. Thus, if an already-trained, more robust model becomes available, it can be readily used in RCAT as \fsrc, as well as to initialize \f before optimizing it, while \fold can be used to reduce NFs and RNFs in the $\beta$-scaled term.

To summarize, with respect to the baseline PCT formulation in Eq.~\eqref{eq:pct}~\cite{yan2021positive}, RCAT provides the following modifications:
\begin{enumerate}
    \item it includes an adversarial training loop to reduce RNFs;
    \item it redefines the hyperparameters to facilitate tuning; and
    \item it allows distilling from a different model than \fold over the whole input space, when available, to retain better accuracy and robustness.
\end{enumerate}

\begin{algorithm}[t!]
    \SetKwInOut{Input}{Input}
    \SetKwInOut{Output}{Output}
    \SetKwComment{Comment}{$\triangleright$\ }{}
    \DontPrintSemicolon
    \Input{
    $\trainset$, the training dataset; $\set L$, the loss defined in \autoref{eq:rfat}, with its fixed hyperparameters $\alpha$, and $ \beta$; \fsrc, the source/init model; \fold, the old model;
     $a$, the attack algorithm used to solve the inner maximization in Eq.~\eqref{eq:rfat} over
    $\mathcal{B}$, the perturbation domain; $\eta$, the learning rate; and $E$, the number of epochs;}
    \Output{$\fnew$, the   model trained with RCAT.}
    
    $\f_{\vct w} \leftarrow \fsrc$ \Comment*[r]{Initialize the model as \fsrc}\label{line:init}
    
    \For{$i \in [1, E]$}{ \label{line:iter_epochs}
        \For{$(\vct x, y) \in \trainset$}{ \label{line:iter_samples}
            $\vct x^\star \leftarrow \argmax_{\vct x^\prime \in \set B(\vct x)} \set L(y, f_{\vct w}(\vct x^\prime))$ \Comment*[r]{Compute the adversarial example $\vct x^\star$ with attack $a$.}\label{line:adv_attack}
                
            $\vct w \leftarrow \vct w - \eta \nabla_{\vct w} \set L(y, f_{\vct w}(\vct x^\star))$ \Comment*[r]{Update the model parameters $\vct w$.} \label{line:update_pars}
            }
        }
        \textbf{return} $\fnew \leftarrow \f_{\vct w}$ \Comment*[r]{Return the RCAT model.}\label{line:return}
\caption{The \rfat algorithm.}
\label{algo:rcat}
\end{algorithm}

\subsubsection{Solution Algorithm} 

We describe here the gradient-based approach used to solve the RCAT learning problem defined in Eq.~\eqref{eq:rfat}. A similar algorithm can be used to solve the PCAT problem in Eq.~\eqref{eq:pct-at}. Let us assume that the function $f$ is parameterized by $\vct w$. Then, the gradient of the objective function in Eq.~\eqref{eq:rfat} with respect to the model parameters $\vct w$ is given as:
\begin{align}
    \vct w \leftarrow \vct w - \eta \sum_{(\vct x, y) \in \trainset} \nabla_{\vct w} \max_{\vct x^\prime \in \set B(\vct x)} \set L(y, f_{\vct w}(\vct x^\prime)) \, ,
\end{align}
where we make the dependency of $f$ on $\vct w$ explicit as $f_{\vct w}$, and use the symbol $\set L$ to denote the sample-wise loss defined in Eq.~\eqref{eq:rfat}, which implicitly depends on $\fsrc$ and $\fold$.

According to Danskin’s theorem~\cite{madry2017towards}, the gradient of the inner maximization is equivalent to the gradient of the inner objective computed at its maximum. This means that, if we assume $\vct x^\star \in \argmax_{\vct x^\prime \in \set B (\vct x)} L (y, f(\vct x^\prime))$, the gradient update formula can be rewritten as:
\begin{align}
    \vct w \leftarrow \vct w - \eta \sum_{(\vct x, y) \in \trainset} \nabla_{\vct w} L(y, f_{\vct w}(\vct x^\star)) \, .
\end{align}
However, computing the exact solution of the inner maximization might be too computationally demanding.
Madry et al.~\cite{madry2017towards} have nevertheless shown that it is still possible to use an approximate solution with good empirical results, by relying on an adversarial attack that computes a perturbation close enough to that of the exact solution.

Under these premises, we can finally state the RCAT algorithm used to optimize Eq.~\eqref{eq:rfat}, given as \autoref{algo:rcat}.
%
%
The algorithm starts by initializing the model $\f_{\vct w}$ with $\fsrc$ (\autoref{line:init}).
It then runs for $E$ epochs (\autoref{line:iter_epochs}), looping over the whole training samples in each epoch (\autoref{line:iter_samples}). 
In each iteration, RCAT runs the attack algorithm $a$ to optimize the adversarial example $\vct x^\star$ within the feasible domain $\set B(\vct x)$ (\autoref{line:adv_attack}).
Then, it uses the adversarial example $\vct x^\star$ to update the model parameters $\vct w$ along the gradient direction (\autoref{line:update_pars}).
While we present here a sample-wise version of the RCAT algorithm, it is worth remarking that it is straightforward to implement it batch-wise.

\subsection{Consistency Results}
\label{subsec:Consistency}

We now analyze the formal guarantees that can be derived for the learning problem when including a \textit{non-regression constraint} to reduce regression of accuracy and robustness.
Under the assumption that samples are independent and identically distributed (i.i.d.) and in the absence of constraints, it is well known that learning algorithms that optimize either the accuracy or robustness yield \textit{consistent} statistical estimators, \ie they converge to the correct model as the number of samples $n$ increases, with a rate of $O(\sfrac{1}{\sqrt{n}})$ in the general case~\cite{hastie2009elements, OnetoJ072}.
This means that collecting samples to increase the training set is generally worthwhile, as it is expected to improve the performance of the model.
However, neither consistency nor the convergence rate is guaranteed when the optimization problem includes constraints~\cite{hastie2009elements}.
In this section, we are the first to show that including the non-regression constraint inside the optimization problem produces an estimator that is consistent on both accuracy and robustness, while also preserving the same convergence rate.

\subsubsection{Preliminaries}

Let us start by formally defining the necessary terms that we will use in this section. We slightly change the notation here to improve readability.
We will denote point-wise loss functions as $\ell$, and dataset (distribution) loss functions as $\mathsf{L}$. We also redefine $f$ such that it outputs the predicted label rather than the logits, \ie $f : \set X \mapsto \set Y$; and denote the indicator function with $\mathbb I(\cdot)$, which returns one if its argument is true, and zero otherwise.
The goal is to design a learning algorithm that chooses a model $f \in \mathcal{F}$ to approximate the posterior probability $\mathbb{P}\{y|\boldsymbol{x}\}$ according to a loss function $\ell(f,\boldsymbol{z}) \in [0,\infty)$, such that $\ell(f,\boldsymbol{z}) = 0$ if the prediction is correct, \ie $y = f(\boldsymbol{x})$, with $\boldsymbol{z} = (\boldsymbol{x}, y)$. This algorithm is often defined via \textit{empirical risk minimization}:
\begin{align}\label{eq:erm}
\min_{f \in \mathcal{F}} \hat{\mathsf{L}}(f,\mathcal{D}) \, ,
\end{align}
where $\hat{\mathsf{L}}(f,\mathcal{D}) = \sfrac{1}{n} \sum_{\boldsymbol{z} \in \mathcal{D}} \ell(f,\boldsymbol{z})$ is the empirical risk estimated from the training data $\set D$.
The hypothesis space $\mathcal{F}$ may be explicitly defined by means of the functional form of $f$ (e.g, linear, convolutions, transformers), or via regularization (\eg $L_p$ norms)~\cite{goodfellow2016deep,aggarwal2018neural}. It may also be implicitly defined via optimization (\eg stochastic gradient descent, early stopping,  dropout)~\cite{goodfellow2016deep,srivastava2014dropout}.
Problem~\eqref{eq:erm} is the empirical counterpart of the \textit{risk minimization} problem:
\begin{align}\label{eq:rm}
\min_{f \in \mathcal{F}} \mathsf{L}(f,\mathcal{Z}) \, ,
\end{align}
where $\mathsf{L}(f,\mathcal{Z}) = \mathbb{E}_{\boldsymbol{z}: \boldsymbol{z} \in \mathcal{Z}} \{ \ell(f,\boldsymbol{z}) \}$ is the \textit{true} risk, \ie the risk computed over the whole distribution of samples $\set Z$.

When considering adversarial robustness, one can define a sample-wise \textit{robust} loss $\tilde{\ell}(f,\boldsymbol{z})$ that quantifies the error of $f$ on the adversarial examples $\tilde{\boldsymbol{x}}$~\cite{bartlett2002rademacher,yin2019rademacher} as:
\begin{align}
\label{eq:advloss}
\tilde{\ell}(f,\boldsymbol{z}) = \max_{\tilde{\boldsymbol{x}} \in \mathcal{B}(\boldsymbol{x})} \ell(f,(\tilde{\boldsymbol{x}},y)) \, .
\end{align}
By using this robust loss as the sample-wise loss of Problems~\eqref{eq:erm} and~\eqref{eq:rm}, we are able to define both the empirical and deterministic optimization problems that amount to maximizing adversarial robustness via \emph{adversarial training}~\cite{madry2017towards}.

We can now introduce the non-regression constraint into the learning problem. 
Under the assumption that the training set $\mathcal{D}$ and the hypothesis space $\mathcal{F}$ are independent from the old model $\fold$,\footnote{We could remove this hypothesis using \eg~\cite{bassily2021algorithmic,pmlr-v51-russo16}, but this would simply over-complicate the presentation with no additional contribution.} we can rewrite Problems~\eqref{eq:erm} and~\eqref{eq:rm} as:
\begin{align}
& \hat{f} = \arg\min_{f \in \mathcal{F}} \hat{\mathsf{L}}(f,\mathcal{D}),\quad \text{s.t.}\ \hat{\mathsf{L}}(f,\mathcal{D}_0) \leq \hat{\epsilon}, \label{eq:erm_nr} \\
& f^* = \arg\min_{f \in \mathcal{F}} \mathsf{L}(f,\mathcal{Z}),\quad \text{s.t.}\ \mathsf{L}(f,\mathcal{Z}_0) \leq \epsilon, \label{eq:rm_nr}
\end{align}
where $\hat f$ is the empirical approximation of the correct function~$f^*$, $\mathcal{D}_0 = \{\boldsymbol{z}: \boldsymbol{z} \in \mathcal{D}, \ell(\fold, \boldsymbol{z}) = 0\}$ is the set of training samples that are correctly classified by the old model, and $\mathcal{Z}_0 = \{\boldsymbol{z}: \boldsymbol{z} \in \mathcal{Z}, \ell(\fold,\boldsymbol{z}) = 0 \}$ is the set of samples correctly classified by $\fold$ over the whole distribution.

Problems~\eqref{eq:erm_nr}-\eqref{eq:rm_nr} can represent both the problem of reducing negative flips of accuracy (\anfs), and that of reducing robustness negative flips (\rnfs). In particular, if we use the zero-one loss $\ell(f,\boldsymbol{z})=\mathbb I(f(\vct x) \neq y)$ as the sample-wise loss, the constraints in Problems~\eqref{eq:erm_nr}-\eqref{eq:rm_nr}
will enforce solutions with low regression of accuracy (\ie \anf rates). 
Instead, when using the robust zero-one loss $\tilde{\ell}(f,\boldsymbol{z}) = \max_{\tilde{\boldsymbol{x}} \in \mathcal{B}(\boldsymbol{x})} \mathbb{I}(f(\tilde{\boldsymbol{x}}) \neq y)$, such constraints will enforce solutions that exhibit low regression of robustness (\ie reduce \rnfs). 

Let us conclude this section by discussing the role of $\hat \epsilon$ and $\epsilon$ in the aforementioned constraints.
While the desired $\epsilon \in [0,\infty)$ should be set to zero to have zero regression (\ie no negative flips of accuracy or robustness),  $\hat{\epsilon}\in [0,\infty)$ is usually set to a small value $\hat{\epsilon} > 0$ for two main reasons.
Theoretically, as discussed in the following, this is a sufficient condition to ensure Problem~\eqref{eq:erm_nr} to be consistent with respect to Problem~\eqref{eq:rm_nr}.
Practically, as shown in~\autoref{sect:exp}, setting $\hat{\epsilon} > 0$ allows us to obtain larger improvements in the test error of $\hat{f}$ since $\mathcal{F}$ may be not perfectly designed and the number of (noisy) samples is limited.

\subsubsection{Main Result}
\label{sect:main-result}

We prove here that the estimator $\hat{f}$ defined in Eq.~\eqref{eq:erm_nr} is \textit{consistent} with the function $f^\star$ that would be learned over the whole distribution (Eq.~\ref{eq:rm_nr}).
To this end, we assume that the following relationship holds, with probability at least $(1-\delta)$:
\begin{align}
\label{eq:eq_leanable}
\max_{f \in \mathcal{F}} \left| \hat{\mathsf{L}}(f,\mathcal{D}) - \mathsf{L}(f,\mathcal{Z}) \right| \leq \mathsf{B}(\delta,n,\mathcal{F}) \xrightarrow{O(\sfrac{1}{\sqrt{n}})} 0,
\end{align}
where $B(\delta,n,\mathcal{F})$ goes to zero as $n \rightarrow \infty$ if the hypothesis space $\mathcal{F}$ is learnable, in the classical sense, with respect to the loss~\cite{shalev2014understanding}.
If this holds, then the hypothesis space $\mathcal{F}$ is also learnable when dealing with the adversarial setting, \ie when using the sample-wise loss defined in Eq.~\eqref{eq:advloss}. This can be proved using the Rademacher complexity, as done in~\cite{yin2019rademacher,bartlett2002rademacher,OnetoJ072}.
Note also that, in the general case, $B(\delta,n,\mathcal{F})$ goes to zero as $O(\sfrac{1}{\sqrt{n}})$~\cite{shalev2014understanding}.

Under this assumption, we prove that $\hat{f}$ is consistent in the following sense.
For a particular value of $\hat{\epsilon}$, we show that
\begin{align}
\label{eq:consistency}
\mathsf{L}(\hat{f},\mathcal{Z}) - \mathsf{L}(f^*,\mathcal{Z}) \xrightarrow{O(\sfrac{1}{\sqrt{n}})} 0, \quad
\mathsf{L}(\hat{f}, \mathcal{Z}_0) \xrightarrow{O(\sfrac{1}{\sqrt{n_0}})} \epsilon,
\end{align}
where $n_0 = | \mathcal{D}_0 |$ is the number of samples that were correctly predicted by the old model \fold, and then if $n \rightarrow \infty$ we also have $n_0 \rightarrow \infty$.
This means that, if the hypothesis space $\set F$ is learnable and the empirical risk minimizer is consistent in the classical setting, then it is also consistent when we add the non-regression constraint to the learning problem, both in the case of \anfs and \rnfs.
\begin{theorem}
\label{thm:mainthm}
Let us consider a learnable $\mathcal{F}$, in the sense of Eq.~\eqref{eq:eq_leanable}, and $\hat{f}$ and $f^*$ defined as in Eqns.~\eqref{eq:erm_nr} and~\eqref{eq:rm_nr} respectively.
Then it is possible to prove the result of 
Eq.~\eqref{eq:consistency}.
\end{theorem}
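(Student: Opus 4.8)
The plan is to reduce the constrained-consistency claim to two applications of the uniform-convergence bound of Eq.~\eqref{eq:eq_leanable}: one controlling the \emph{objective} at the ambient sample size $n$, and one controlling the \emph{constraint} at the reduced sample size $n_0 = |\mathcal{D}_0|$. The first is exactly Eq.~\eqref{eq:eq_leanable} applied to $\mathcal{D}$ versus $\mathcal{Z}$. For the second I would argue that, under the stated independence of $\fold$ from $\mathcal{D}$ and $\mathcal{F}$, conditioning on the event $\ell(\fold,\boldsymbol{z})=0$ turns $\mathcal{D}_0$ into an i.i.d.\ sample of (conditional) size $n_0$ drawn from the restricted distribution $\mathcal{Z}_0$; learnability of $\mathcal{F}$ is inherited by this conditional distribution, so Eq.~\eqref{eq:eq_leanable} yields a second bound $\max_{f\in\mathcal{F}}|\hat{\mathsf{L}}(f,\mathcal{D}_0)-\mathsf{L}(f,\mathcal{Z}_0)| \leq \mathsf{B}(\delta,n_0,\mathcal{F})$ holding with probability at least $1-\delta$ conditionally on $n_0$. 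Since each training point falls in $\mathcal{D}_0$ independently with the fixed probability $\mathbb{P}_{\mathcal{Z}}\{\ell(\fold,\boldsymbol{z})=0\}$, we have $n_0\to\infty$ whenever $n\to\infty$, and the rate $O(\sfrac{1}{\sqrt{n_0}})$ follows.

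The key device is then a \emph{feasibility transfer} realized by the ``particular value'' of $\hat\epsilon$ announced in the statement. I would set $\hat\epsilon = \epsilon + \mathsf{B}(\delta,n_0,\mathcal{F})$. On the high-probability event of the $\mathcal{D}_0$-bound, any population-feasible $f$ satisfies $\hat{\mathsf{L}}(f,\mathcal{D}_0)\leq \mathsf{L}(f,\mathcal{Z}_0)+\mathsf{B}(\delta,n_0,\mathcal{F})$, so in particular $f^*$ (which obeys $\mathsf{L}(f^*,\mathcal{Z}_0)\leq\epsilon$) becomes feasible for the empirical Problem~\eqref{eq:erm_nr}. This is precisely why $\hat\epsilon>0$ is needed even when $\epsilon=0$, matching the earlier remark. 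Having made $f^*$ admissible, the objective bound follows by the standard comparison chain: empirical optimality of $\hat f$ gives $\hat{\mathsf{L}}(\hat f,\mathcal{D})\leq\hat{\mathsf{L}}(f^*,\mathcal{D})$, and sandwiching both sides with the full-sample bound of Eq.~\eqref{eq:eq_leanable} yields $\mathsf{L}(\hat f,\mathcal{Z})-\mathsf{L}(f^*,\mathcal{Z})\leq 2\,\mathsf{B}(\delta,n,\mathcal{F})=O(\sfrac{1}{\sqrt{n}})$, the first half of Eq.~\eqref{eq:consistency}. For the constraint half I would instead use empirical feasibility of $\hat f$ itself: since $\hat{\mathsf{L}}(\hat f,\mathcal{D}_0)\leq\hat\epsilon$ by construction, the $\mathcal{D}_0$-bound gives $\mathsf{L}(\hat f,\mathcal{Z}_0)\leq\hat{\mathsf{L}}(\hat f,\mathcal{D}_0)+\mathsf{B}(\delta,n_0,\mathcal{F})\leq\epsilon+2\,\mathsf{B}(\delta,n_0,\mathcal{F})$, so the population constraint slack decays as $O(\sfrac{1}{\sqrt{n_0}})$; this is the meaning of the second arrow in Eq.~\eqref{eq:consistency}, namely an upper bound converging to $\epsilon$, the value being automatically nonnegative. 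Finally I would combine the two high-probability events by a union bound, replacing $\delta$ with $\delta/2$ in each, so that both statements hold simultaneously with probability at least $1-\delta$.

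The step I expect to be the main obstacle is the rigorous handling of the constraint sample: making precise that the uniform-convergence guarantee, stated in Eq.~\eqref{eq:eq_leanable} for an i.i.d.\ sample from $\mathcal{Z}$, legitimately transfers to $\mathcal{D}_0$, which is a \emph{data-dependent, randomly-sized} subsample carved out by $\fold$. The clean route is to condition on $\fold$ (licensed by the independence footnote) and on the realized value of $n_0$, observe that $\mathcal{D}_0$ is then i.i.d.\ $\sim\mathcal{Z}_0$, and invoke learnability of $\mathcal{F}$ on $\mathcal{Z}_0$; the remaining care is that the adversarial (robust) loss of Eq.~\eqref{eq:advloss} must itself satisfy Eq.~\eqref{eq:eq_leanable}, which is exactly the Rademacher-complexity fact cited from~\cite{yin2019rademacher,bartlett2002rademacher,OnetoJ072} and which lets the identical argument cover both the \anf and \rnf instantiations without change.
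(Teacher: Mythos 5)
Your proposal follows essentially the same route as the paper's own proof: the same choice $\hat{\epsilon} = \epsilon + \mathsf{B}(\delta, n_0, \mathcal{F})$ yielding the feasibility transfer (so that $f^*$ is admissible for the empirical problem), the same three-term excess-risk decomposition bounded by $2\,\mathsf{B}(\delta,n,\mathcal{F})$, and the same use of empirical feasibility of $\hat{f}$ to bound $\mathsf{L}(\hat{f},\mathcal{Z}_0)$ by $\epsilon + 2\,\mathsf{B}(\delta,n_0,\mathcal{F})$. Your added care in conditioning on $\fold$ and on $n_0$ to justify applying the uniform bound to $\mathcal{D}_0$ is a point the paper asserts without elaboration, and your $\delta/2$ union bound versus the paper's $1-3\delta$ accounting is a cosmetic difference only.
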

\begin{proof}
Let us note that, thanks to Eq.~\eqref{eq:eq_leanable}, with probability at least $(1-\delta)$ it holds that
\begin{align}
\max_{f \in \mathcal{F}} \left| \mathsf{L}(f, \mathcal{Z}_0) - \hat{\mathsf{L}}(f,\mathcal{D}_0) \right| \leq \mathsf{B}(\delta,n_0,\mathcal{F}) \, .
\end{align} 
We can then state that, with probability at least $(1-\delta)$,
\begin{align}
\label{eq:prop1}
& \{f:
f \in \mathcal{F},
\mathsf{L}(f, \mathcal{Z}_0) \leq \epsilon,
\} \nonumber \\
& \subseteq
\{f:
f \in \mathcal{F},
\hat{\mathsf{L}}(f,\mathcal{D}_0)
\leq \hat{\epsilon} = \epsilon + \mathsf{B}(\delta,n_0,\mathcal{F}),
\}
\subseteq
\mathcal{F}.
\end{align}
Thanks to Eqns.~\eqref{eq:eq_leanable} and~\eqref{eq:prop1}, we can thus decompose the excess risk, with probability at least $(1-3\delta)$, as:
\begin{align}
\mathsf{L}(\hat{f},\mathcal{Z})
- \mathsf{L}(f^*,\mathcal{Z}) 
& =
\mathsf{L}(\hat{f},\mathcal{Z})
- \hat{\mathsf{L}}(\hat{f},\mathcal{D})
+ \hat{\mathsf{L}}(\hat{f},\mathcal{D}) \nonumber\\
& \quad - \hat{\mathsf{L}}(f^*,\mathcal{D})
+ \hat{\mathsf{L}}(f^*,\mathcal{D})
- \mathsf{L}(f^*,\mathcal{Z}) \nonumber \\
& \leq
\hat{\mathsf{L}}(\hat{f},\mathcal{D})
- \hat{\mathsf{L}}(f^*,\mathcal{D})
+ 2 \mathsf{B}(\delta,n,\mathcal{F}) \nonumber \\
& \leq
2 \mathsf{B}(\delta,n,\mathcal{F}),
\end{align}
which proves the first statement in  Eq.~\eqref{eq:consistency}.
Furthermore, thanks to Eqns.~\eqref{eq:eq_leanable} and~\eqref{eq:prop1}, with probability at least $(1-\delta)$, it also holds that
\begin{align}
\mathsf{L}(\hat{f}, \mathcal{Z}_0)
& \leq
\mathsf{L}(\hat{f}, \mathcal{Z}_0) 
- \hat{\mathsf{L}}(\hat{f},\mathcal{D}_0)
+ \hat{\mathsf{L}}(\hat{f},\mathcal{D}_0) \nonumber \\
& \leq
\mathsf{B}(\delta,n_0,\mathcal{F})
+ \hat{\mathsf{L}}(\hat{f},\mathcal{D}_0) \nonumber \\
& \leq
\epsilon + 2 \mathsf{B}(\delta,n_0,\mathcal{F}),
\end{align}
which proves the second statement in Eq.~\eqref{eq:consistency}.
\end{proof}
\subsubsection{Consistency of PCT, PCAT, and RCAT}
\label{subsec:Practice}
The previous section proves the consistency of $\hat{f}$ with the usual convergence rate when the learning problem includes a non-regression constraint. This means that designing updates that reduce \anfs or \rnfs does not compromise any of the standard properties that normally hold for learning algorithms.

We show here that the constrained learning problem defined in the previous section can be rewritten using a penalty term instead of requiring an explicit non-regression constraint, and how this maps to the formulation of PCT (Eq.~\ref{eq:pct}), PCAT (Eq.~\ref{eq:pct-at}) and RCAT (Eq.~\ref{eq:rfat}), to show that they all provide consistent estimators with the usual convergence rate.  
As shown in~\cite{ito2014inverse,oneto2016tikhonov}, Problem~\eqref{eq:erm_nr} is equivalent to:
\begin{align}
\label{eq:final_alg}
\min_{f \in \mathcal{F}} \hat{\mathsf{L}}(f,\mathcal{D}) + \mu \, \hat{\mathsf{L}}(f,\mathcal{D}_0) \, ,
\end{align}
with $\mu \in [0, \infty)$. It is now straightforward to map this formulation to that of PCT, PCAT, and RCAT. The underlying idea is to group the loss terms that are computed over the whole training set $\set D$ in the formulations of PCT, PCAT, and RCAT within the first term of Eq.~\eqref{eq:final_alg}, while assigning the loss term computed on the subset $\set D_0$ to the $\mu$-scaled term.

\myparagraph{PCT.} For PCT (Eq.~\ref{eq:pct}), we can set:
\begin{flalign}
 & \hat{\mathsf{L}}(f,\mathcal{D}) = \sum_{(\vct x,y) \in \set D} L(y, f(\vct x)) + \lambda \alpha L_D(f(\vct x), \fold(\vct x)) , \\
 &\hat{\mathsf{L}}(f,\mathcal{D}_0) = \sum_{(\vct x, y) \in \set D_0} L_D(f(\vct x), \fold(\vct x)) , 
\end{flalign}
and $\mu= \lambda \beta$. The set $\set D_0$ contains the samples that are classified correctly by \fold, \ie for which $\hat{y}^{\rm old}(\vct x)=y$. Recall also that here the sample-wise loss $L$ is the standard (non-robust) cross-entropy loss.

\myparagraph{PCAT.} For PCAT (Eq.~\ref{eq:pct-at}), we can set $\mu=\lambda \beta$, and use  sample-wise \textit{robust} losses instead of  the standard ones used in PCT:
\begin{flalign}
 & \hat{\mathsf{L}}(f,\mathcal{D}) = \sum_{(\vct x, y) \in \set D} \max_{\vct x^\prime \in \set B_{\vct x}} L(y, f(\vct x^\prime)) + \lambda \alpha L_D(f(\vct x^\prime), \fold(\vct x^\prime)) , \\
  & \hat{\mathsf{L}}(f,\mathcal{D}_0) = \sum_{(\vct x, y) \in \set D_0} \max_{\vct x^\prime \in \set B_{\vct x}}  L_D(f(\vct x^\prime), \fold(\vct x^\prime)),
\end{flalign}
where $B_{\vct x}$ compactly denotes $\set B(\vct x)$, and $\set D_0$ contains the adversarial examples that are correctly classified by \fold, \ie the training samples for which $\hat{y}^{\rm old}(\vct x^\prime)=y$, $\forall \vct x^\prime \in \set B(\vct x)$.

\myparagraph{RCAT.} For RCAT (Eq.~\ref{eq:rfat}), we can set:
\begin{flalign}
 &\hat{\mathsf{L}}(f,\mathcal{D}) = \sum_{(\vct x, y) \in \set D} \max_{\vct x^\prime \in \set B_{\vct x}} \gamma L(y, f(\vct x^\prime)) +  \alpha L_D(f(\vct x^\prime), \fsrc(\vct x^\prime)) , \\
 &\hat{\mathsf{L}}(f,\mathcal{D}_0) = \sum_{(\vct x, y) \in \set D_0} \max_{\vct x^\prime \in \set B_{\vct x}} L_D(f(\vct x^\prime), \fold(\vct x^\prime)),
\end{flalign}
and $\mu = \beta$. The set $\set D_0$ is defined as in the case of PCAT.

It is worth remarking that not only PCAT and RCAT provide consistent statistical estimators, but this also applies to PCT~\cite{yan2021positive} and any algorithm derived from a formulation that includes a non-regression constraint or penalty term.

\section{Experimental Analysis}
\label{sect:exp}
To show the validity of the \rcat method, we report here an extensive experimental analysis involving several robust machine-learning models designed for image classification.
After describing the experimental setup (\autoref{subsec:exp_setup}), we demonstrate that the problem of regression of robustness is relevant when replacing a robust machine-learning model with an improved state-of-the-art model (\autoref{subsec:existence_rnf}). 
We then show that \rfat can better mitigate the regression compared to PCT, PCAT, and na\"ive model update strategies, \edit{not only when considering a single update} (\autoref{sect:ftuning}), 
\edit{but also in the case of multiple sequential updates (\autoref{sect:sequential}).}

\subsection{Experimental Setup}
\label{subsec:exp_setup}
Here, we detail the experimental setup used in our analyses.

\myparagraph{Datasets.} 
\edit{We consider CIFAR-10 and ImageNet as they have been extensively used to benchmark the adversarial robustness of image classifiers in the well-known RobustBench framework~\cite{croce2021robustbench}.}
CIFAR-10 consists of 60,000 color images of size $32 \times 32$ belonging to $10$ different classes, including 50,000 training images and 10,000 test images.
Following current evaluation standards~\cite{croce2021robustbench}, we use 40,000 training images to update models with PCT, PCAT, and RCAT, the remaining 10,000 training images as a validation set for hyperparameter tuning, and a subset of 2,000 test images to evaluate the performance metrics (as detailed below).
\edit{For ImageNet, we consider only its validation set (\ie 50,000 color images of size $224 \times 224$ divided into 1,000 classes), using 36,000 images to update the models, 9,000 as a validation set for hyperparameter tuning, and the remaining 5,000 for testing.}

\myparagraph{Robust Models.} 
\edit{In our experiments, we consider robust models from RobustBench~\cite{croce2021robustbench}, evaluated against $\ell_{\infty}$-norm attacks with a perturbation budget of $\epsilon = 8/255$ and $\epsilon = 4/255$ for CIFAR-10 and ImageNet, respectively. }
\edit{In particular,} we consider seven CIFAR-10 robust models denoted from the least to the most robust with \edit{$C_1, \ldots, C_7$}, and originally proposed respectively in~\cite{Engstrom2019Robustness, Zhang2020Attacks, Rice2020Overfitting, Rade2021Helper, Hendrycks2019Using, Addepalli2021Towards, Carmon2019Unlabeled}. 
\edit{We then consider five ImageNet robust models, denoted from the least to the most robust with $I_1, \ldots, I_5$, and originally proposed in~\cite{Salman2020Do_R18, Engstrom2019Robustness, Chen2024Data_WRN_50_2} ($I_1, I_2, I_3$), and~\cite{Liu2023Comprehensive} ($I_4, I_5$).}
\edit{All the considered models} use different architectures, training strategies, and exhibit different trade-offs between accuracy and robustness, enabling us to simulate different scenarios in which there would be a clear incentive in replacing an old model with an improved one.

\myparagraph{Model Updates.} We consider four different model update strategies: (i) na\"ive, where we just replace the old model with an already-trained, more robust one; (ii) PCT; (iii) PCAT; and (iv) RCAT.
When using PCT, PCAT, and RCAT, we fine-tune \edit{CIFAR-10} models using \textit{E} = \numepochs training epochs, and batches of \batchsize samples. 
\edit{For ImageNet models, we use 10 epochs and batches of 128 samples for the first four models. 
We use batches of 64 samples for the last model instead, to fit in memory, and reduce the number of epochs to 5 to keep the number of iterations unchanged.}
We set the learning rate $\eta =$\lr \edit{for training on both datasets}.
PCAT and RCAT also require implementing adversarial training using an attack algorithm $a$, as detailed in \autoref{algo:rcat}. To this end, we follow the implementation of the \textit{Fast Adversarial Training} (FAT) approach proposed in~\cite{wong2020fast}, which has been shown to provide similar results to the computationally-demanding adversarial training (AT) proposed in~\cite{madry2017towards}, while being far more efficient. The underlying idea of FAT is to randomly perturb the initial training samples and then use a fast, non-iterative attack to compute the corresponding adversarial perturbations. In particular, instead of using the iterative Projected Gradient Descent (PGD) attack as done in AT~\cite{madry2017towards}, FAT uses the so-called Fast Gradient Sign Method (FGSM)~\cite{goodfellow2014explaining}, \ie a much faster, non-iterative $\ell_\infty$-norm attack. Even if FGSM is typically less effective than PGD in finding adversarial examples, when combined with random initialization to implement adversarial training, it turns out to achieve competing results~\cite{wong2020fast}.

\myparagraph{Hyperparameter Tuning.}
We choose the hyperparameters of \pct, \pctat, and \rfat that minimize the overall number of negative flips (NFs) and robustness negative flips (RNFs) \edit{(\ie their sum)} on the validation set, to achieve a reasonable trade-off between reducing the regression of accuracy and that of robustness with respect to the na\"ive strategy of replacing the previous model with the more recent one.
For \pct (\autoref{eq:pct}) and \pctat (\autoref{eq:pct-at}) baseline methods, we fix $\lambda = 1$ and $\alpha=1$, and we run a grid-search on $\beta \in \{1, 2, 5, 10\}$, as recommended by Yan et al.~\cite{yan2021positive}.
For \rfat, we run a grid-search on $(\alpha, \beta) \in \{(0.75, 0.2), (0.7, 0.2), (0.5, 0.4), (0.3, 0.6)\}$ while fixing $\gamma = 1 - \alpha - \beta$.
These configurations attempt to give more or less importance to the non-regression penalty term, while RCAT also re-balances the other loss components.

\begin{figure*}
    \centering
        \begin{subfigure}{0.49\linewidth}
            \includegraphics[width=\linewidth]{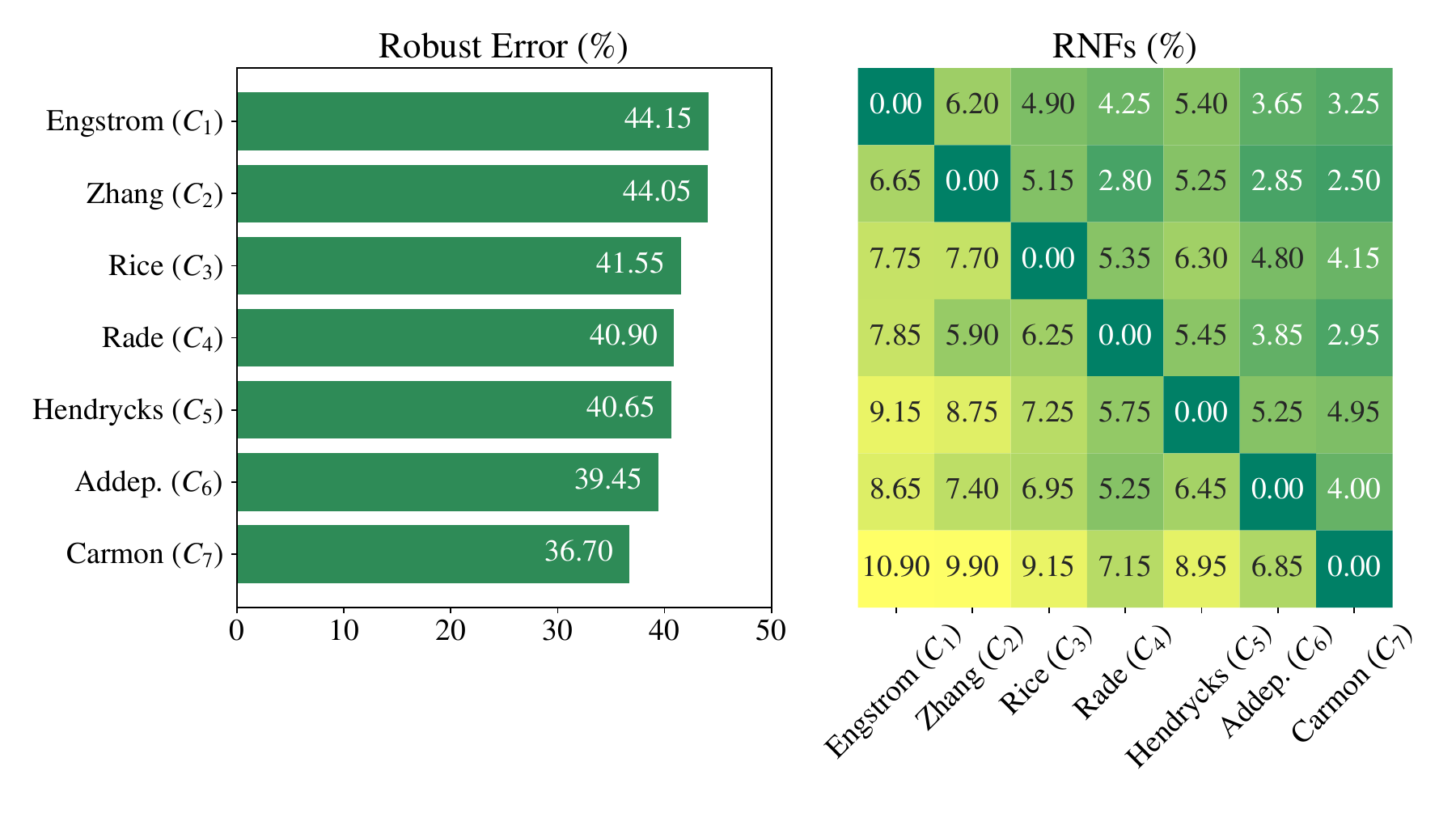}
            \caption{Regression of robustness, measured with RNF rates.}
            \label{fig:cifar10_robs}
        \end{subfigure}
        \begin{subfigure}{0.49\linewidth}
            \includegraphics[width=\linewidth]{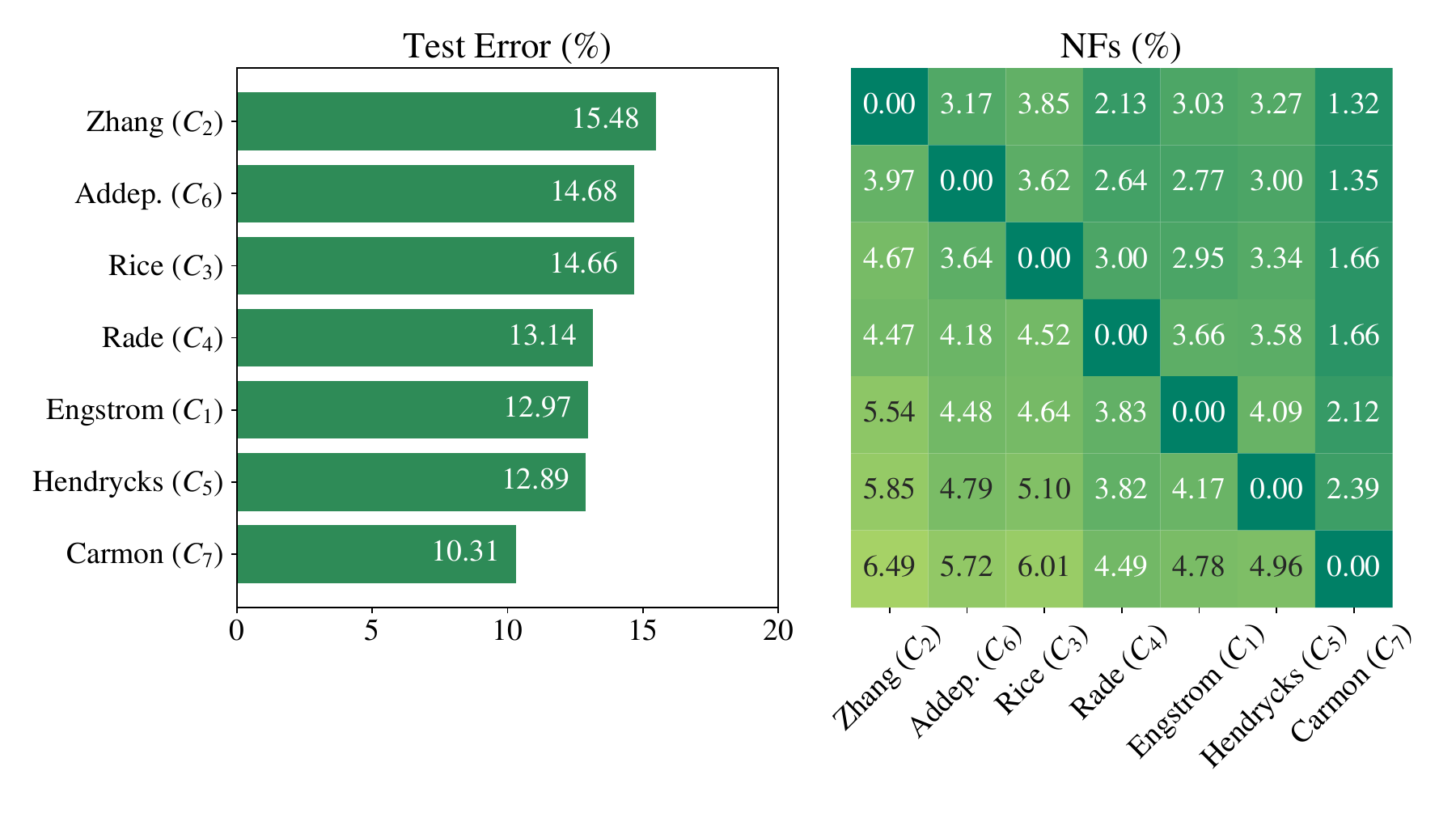}
            \caption{Regression of accuracy, measured with NF rates.}
            \label{fig:cifar10_acc}
        \end{subfigure}
    \caption{
    Regression of robustness and accuracy for \edit{the CIFAR-10 robust models $C_1, \ldots, C_7$}~\cite{Engstrom2019Robustness,Zhang2020Attacks,Rice2020Overfitting,Rade2021Helper,Hendrycks2019Using,Addepalli2021Towards,Carmon2019Unlabeled}.
    (a) \textit{Left:} Robust error (\%) of each model, sorted in descending order. \textit{Right:} RNFs (\%) attained when replacing old models (in \textit{rows}) with new ones (in \textit{columns}).
    (b) \textit{Left:} Test error (\%) of each model, sorted in descending order. \textit{Right:} NFs (\%) attained when replacing old models (in \textit{rows}) with new ones (in \textit{columns}). Values in the upper (lower) triangular matrices evaluate regression when the \textit{new} model has better (worse) average robustness/accuracy than the \textit{old} model.
    }
    \label{fig:cifar10_churn_matrix}
\end{figure*}

\myparagraph{Performance Metrics.} 
\edit{We consider four relevant metrics to evaluate the given methods:} (i) the \textit{test error}, \ie
the percentage of misclassified (clean) test samples;
(ii) the \textit{robust error}, \ie 
the percentage of misclassified adversarial examples;
(iii) the fraction of negative flips (NFs, \autoref{eq:nf}), to quantify the regression of accuracy; and (iv) the fraction of robustness negative flips (RNFs, \autoref{eq:rnf}), to evaluate the regression of robustness.
To evaluate the robust error and RNF rates, we optimize adversarial examples against each model using the AutoPGD~\cite{croce2020reliable} implementation from \textit{adversarial-library},\footnote{Available at \url{https://github.com/jeromerony/adversarial-library}.} with the \textit{Difference-of-Logit-Ratio} (DLR) loss and $N=50$ iterations.

\myparagraph{Hardware.} All the experiments have been run on a workstation with an Intel\textsuperscript{\textregistered} Xeon\textsuperscript{\textregistered} Gold 5217 CPU with 32 cores (3.00 GHz), 192 GB of RAM, and two Nvidia\textsuperscript{\textregistered} RTX A6000 GPUs with 48 GB of memory each.

\subsection{Evaluating Regression of Robustness}
\label{subsec:existence_rnf}

The first experiment presented here aims to empirically show that, once updated, machine-learning models are affected not only by a regression of accuracy, as shown in~\cite{yan2021positive}, but also by a regression of \textit{robustness}.
To show this novel phenomenon, as described in \autoref{subsec:exp_setup}, we first rank the \edit{CIFAR-10} models \edit{$C_1, ..., C_7$} from the least to the most robust, based on the robust error evaluated on the $2,000$ test samples perturbed with AutoPGD.
The models, along with their robust errors, are reported in order in \autoref{fig:cifar10_robs} (left).
Then, assuming a na\"ive update strategy that just replaces the old model with the new one, without performing any fine-tuning of the latter, we evaluate regression of robustness for all possible model pairs, as shown in \autoref{fig:cifar10_robs} (right).
Each cell of this matrix reports the regression of robustness (\ie the fraction of RNFs) induced by the new model (reported in the corresponding column) when replacing the old one (reported in the corresponding row).
The upper (lower) triangular matrix represents the cases where the new model has a better (lower) average robust error than the old one.
The diagonal corresponds to replacing the model with itself; thus, no regression is observed.
Even in the ideal case where machine-learning models are updated with more robust ones (\ie in the upper triangular matrix), the RNF rate ranges from roughly $2\%$ to $6\%$.
This implies that an increment in average robustness does not guarantee that \edit{previously-ineffective adversarial attacks on some samples remain unsuccessful after update.}

We perform the same analysis on \edit{$C_1, \ldots, C_7$} to quantify also their regression of accuracy.
We thus re-order the models according to their reported test errors, as shown in \autoref{fig:cifar10_acc} (left).
We then consider again all the possible model pairs to simulate different model updates, and report the NFs corresponding to each na\"ive model update (\ie by replacement) in \autoref{fig:cifar10_acc} (right). 
Not surprisingly, all updates induce a regression of accuracy, even when the new models are more accurate than the old ones (\ie in the upper triangular matrix), as also already shown by Yan et al.~\cite{yan2021positive}.

We can thus conclude that while updating machine-learning models may be beneficial to improve their average accuracy or robustness, this practice may induce a significant regression of both metrics when evaluated sample-wise.

\begin{table}
\centering
\caption{
Results for PCT, PCAT, and RCAT on the $10$ \edit{CIFAR-10} model updates with the highest RNF rates.
In each update, denoted with ($C_i, C_j$), the baseline model $C_i$ is replaced with $C_j$ using the \naive (replacement) strategy, or by fine-tuning $C_j$ with PCT, PCAT, and RCAT. For each update, we evaluate the test error, \anfs, robust error, and \rnfs (in \%), and highlight in \textbf{bold} the method achieving the best trade-off between NFs and RNFs, \ie the lowest value of their sum.
}
\label{tab:ft_results}
\resizebox{0.95\columnwidth}{!}{
\begin{tabular}{l l c c c c c}
\toprule
&           &        \textbf{Test Error} &           \textbf{NFs} &       \textbf{Robust Error} &           \textbf{RNFs} \\

\midrule \multirow{5}{*}{\rotatebox[origin=c]{90}{(1)}} 
\multirow{5}{*}{\rotatebox[origin=c]{90}{($C_3, C_5$)}} & \base &           14.66 &              - &           41.55 &              - \\
& \naive &           12.89 &           3.34 &           40.65 &           6.30 \\ \cmidrule{2-6} 
& \pct &    {6.77} &   {1.20} &           66.10 &          26.10 \\
& \pctat &            9.80 &           2.10 &           47.55 &          10.95 \\
& \textbf{\mixmseat} &          \textbf{ 11.25} &           \textbf{2.30} &   \textbf{40.45} &   \textbf{6.00} \\

\midrule \multirow{5}{*}{\rotatebox[origin=c]{90}{(2)}} 
\multirow{5}{*}{\rotatebox[origin=c]{90}{($C_4, C_5$)}} & \base &           13.14 &              - &           40.90 &              - \\
& \naive &           12.89 &           3.58 &   {40.65} &   {5.45} \\ \cmidrule{2-6} 
& \pct &    {6.16} &   {1.02} &           72.95 &          32.35 \\
& \pctat &            8.69 &           1.65 &           47.20 &           9.60 \\
& \textbf{\mixmseat} &           \textbf{10.82} &           \textbf{1.76} &           \textbf{41.55} &          \textbf{ 5.60} \\

\midrule \multirow{5}{*}{\rotatebox[origin=c]{90}{(3)}} 
\multirow{5}{*}{\rotatebox[origin=c]{90}{($C_1, C_5$)}} & \base &           12.97 &              - &           44.15 &              - \\
& \naive &           12.89 &           4.09 &           40.65 &           5.40 \\ \cmidrule{2-6} 
& \pct &    {8.01} &   {1.88} &           55.70 &          14.40 \\
& \pctat &            9.34 &           2.26 &           47.90 &           8.80 \\
& \textbf{\mixmseat} &           \textbf{10.75} &          \textbf{2.52} &   \textbf{40.50} &   \textbf{4.55} \\

\midrule \multirow{5}{*}{\rotatebox[origin=c]{90}{(4)}} 
\multirow{5}{*}{\rotatebox[origin=c]{90}{($C_3, C_4$)}} & \base &           14.66 &              - &           41.55 &              - \\
& \naive &           13.14 &           3.00 &   {40.90} &           5.35 \\ \cmidrule{2-6} 
& \pct &    {8.48} &   {1.37} &           51.30 &          12.45 \\
& \pctat &            9.99 &           1.72 &           44.90 &           7.80 \\
& \textbf{\mixmseat} &           \textbf{11.04} &           \textbf{1.68} &            \textbf{40.90} &   \textbf{4.60} \\

\midrule \multirow{5}{*}{\rotatebox[origin=c]{90}{(5)}} 
\multirow{5}{*}{\rotatebox[origin=c]{90}{($C_2, C_5$)}} & \base &           15.48 &              - &           44.05 &              - \\
& \naive &           12.89 &           3.27 &           40.65 &           5.25 \\ \cmidrule{2-6} 
& \pct &    {5.35} &   {1.12} &           84.05 &          40.35 \\
& \pctat &            8.81 &           1.65 &           47.00 &           8.35 \\
& \textbf{\mixmseat} &           \textbf{11.81} &           \textbf{2.27} &   \textbf{40.45} &   \textbf{4.85} \\

\midrule \multirow{5}{*}{\rotatebox[origin=c]{90}{(6)}} 
\multirow{5}{*}{\rotatebox[origin=c]{90}{($C_2, C_3$)}} & \base &           15.48 &              - &           44.05 &              - \\
& \naive &           14.66 &           3.85 &   {41.55} &           5.15 \\ \cmidrule{2-6} 
& \pct &   {10.59} &   {1.76} &           46.90 &           7.70 \\
& \textbf{\pctat} &           \textbf{14.53} &           \textbf{2.41} &           \textbf{42.10} &   \textbf{3.15} \\
& \mixmseat &           14.25 &           3.29 &           41.90 &           5.05 \\

\midrule \multirow{5}{*}{\rotatebox[origin=c]{90}{(7)}} 
\multirow{5}{*}{\rotatebox[origin=c]{90}{($C_5, C_7$)}} & \base &           12.89 &              - &           40.65 &              - \\
& \naive &           10.31 &           2.39 &           36.70 &           4.95 \\ \cmidrule{2-6} 
& \pct &    {6.47} &   {1.21} &           40.90 &           7.55 \\
& \pctat &            7.55 &           1.73 &           38.85 &           6.75 \\
& \textbf{\mixmseat} &           \textbf{8.30} &           \textbf{1.45} &   \textbf{36.50} &   \textbf{4.65} \\

\midrule \multirow{5}{*}{\rotatebox[origin=c]{90}{(8)}} 
\multirow{5}{*}{\rotatebox[origin=c]{90}{($C_3, C_7$)}} & \base &           14.66 &              - &           41.55 &              - \\
& \naive &           10.31 &           1.66 &   {36.70} &   {4.15} \\ \cmidrule{2-6} 
& \pct &    {7.40} &   {0.97} &           41.10 &           6.25 \\
& \pctat &            8.74 &           1.24 &           39.00 &           4.75 \\
& \textbf{\mixmseat} &           \textbf{ 8.80} &           \textbf{1.21} &           \textbf{37.55} &           \textbf{4.35} \\

\midrule \multirow{5}{*}{\rotatebox[origin=c]{90}{(9)}} 
\multirow{5}{*}{\rotatebox[origin=c]{90}{($C_6, C_7$)}} & \base &           14.68 &              - &           39.45 &              - \\
& \naive &           \textbf{10.31} &           \textbf{1.35} &   \textbf{36.70} &   \textbf{4.00} \\ \cmidrule{2-6} 
& \pct &    {7.16} &   {1.06} &           42.75 &           8.25 \\
& \pctat &            7.93 &           1.34 &           40.15 &           6.95 \\
& \mixmseat &            8.92 &           1.10 &           37.80 &           5.00 \\

\midrule \multirow{5}{*}{\rotatebox[origin=c]{90}{(10)}} 
\multirow{5}{*}{\rotatebox[origin=c]{90}{($C_1, C_7$)}} & \base &           12.97 &              - &           44.15 &              - \\
& \naive &           10.31 &           2.12 &   {36.70} &           3.25 \\ \cmidrule{2-6} 
& \pct &    {8.01} &           1.22 &           41.50 &           4.30 \\
& \pctat &            9.31 &           1.33 &           40.95 &           3.70 \\
& \textbf{\mixmseat} &            \textbf{8.41} &   \textbf{1.11} &           \textbf{37.15} &   \textbf{2.80} \\

\bottomrule
\hline 
\end{tabular}
}
\end{table}

\begin{table}
\centering
\caption{
\edit{Results for PCT, PCAT, and RCAT on the $4$ ImageNet model updates.
In each update, denoted with ($I_i,I_j$), the baseline model $I_i$ is replaced with $I_j$ using the \naive (replacement) strategy, or by fine-tuning $I_j$ with PCT, PCAT, and RCAT. 
For each update, we evaluate the test error, \anfs, robust error, and \rnfs (in \%), and highlight in \textbf{bold} the method achieving the best trade-off between NFs and RNFs, \ie the lowest value of their sum.
}
}
\label{tab:imagenet_oneshot}
\resizebox{0.99\columnwidth}{!}{
\begin{tabular}{l l c c c c c}
\toprule
&           &        \textbf{Test Error} &           \textbf{NFs} &       \textbf{Robust Error} &           \textbf{RNFs} \\

\midrule 
\multirow{5}{*}{\rotatebox[origin=c]{90}{($I_1, I_2$)}} & \base &           47.78 &              - &           73.48 &              - \\
& \naive &           37.66 &           2.22 &           66.98 &           2.88 \\ \cmidrule{2-6} 
& \pct &             34.92 &            2.32 &           76.54 &          7.62 \\
& \pctat &            38.26 &           2.66 &           65.38 &          2.32 \\
& \textbf{\mixmseat} &      \textbf{37.70} &           \textbf{1.96} &   \textbf{64.78} &   \textbf{2.00} \\

\midrule 
\multirow{5}{*}{\rotatebox[origin=c]{90}{($I_2, I_3$)}} & \base &           37.66 &              - &           66.98 &              - \\
& \naive &           31.22 &           2.84 &   57.22 &   2.14 \\ \cmidrule{2-6} 
& \pct &    28.42 &   2.94 &           83.52 &          17.82 \\
& \pctat &            32.58 &           4.14 &           60.74 &           3.96 \\
& \textbf{\mixmseat} &           \textbf{30.90} &           \textbf{2.24} &           \textbf{55.88} &          \textbf{1.62} \\

\midrule 
\multirow{5}{*}{\rotatebox[origin=c]{90}{($I_3, I_4$)}} & \base &           31.22 &              - &           57.22 &              - \\
& \naive &           23.60 &           2.76 &   44.08 &   2.20 \\ \cmidrule{2-6} 
& \pct &    19.96 &   2.16 &           75.12 &          20.50 \\
& \textbf{\pctat} &            \textbf{24.68} &           \textbf{2.28} &           \textbf{44.02} &           \textbf{1.62} \\
& \mixmseat &           23.12 &           2.26 &           43.08 &          1.82 \\

\midrule 
\multirow{5}{*}{\rotatebox[origin=c]{90}{($I_4, I_5$)}} & \base &           23.60 &              - &           44.08 &              - \\
& \naive &           22.12 &           2.74 &   40.64 &   2.54 \\ \cmidrule{2-6} 
& \pct &    17.70 &   2.00 &           80.58 &          37.22 \\
& \textbf{\pctat} &            \textbf{20.58} &           \textbf{2.44} &           \textbf{38.48} &           \textbf{1.92} \\
& \textbf{\mixmseat} &           \textbf{21.24} &           \textbf{2.30} &           \textbf{39.54} &          \textbf{2.06} \\

\bottomrule
\hline 
\end{tabular}
}
\end{table}

\subsection{Reducing Regression in Robust Image Classifiers}
\label{sect:ftuning}

After quantifying the non-negligible impact of \rnfs in model updates, we show how RCAT can tackle this issue, outperforming the competing model update strategies of PCT and PCAT.
To this end, we first select suitable model pairs that enable simulating updates in which the new model improves \textit{both} accuracy \textit{and} robustness \wrt the old one. This amounts to considering only $14$ model pairs (out of the overall $42$) \edit{for CIFAR-10}.
Among these $14$ cases, we exclude $4$ model pairs exhibiting an RNF rate lower than $3\%$, which results in retaining the $10$ cases with the highest RNFs listed in \autoref{tab:ft_results}.
\edit{Let us also recall that, when using the \naive update strategy, we just replace the old model $C_i$ with the new model $C_j$, and measure the corresponding NFs and RNFs. When using PCT, PCAT, and RCAT, we initialize the new model to be fine-tuned with $C_j$. For RCAT, we also set $\fsrc = C_j$.}
\edit{For ImageNet models, we consider the four pairs reported in \autoref{tab:imagenet_oneshot}, in which the new models always exhibit improved accuracy and robustness \wrt the old ones, with non-negligible NF and RNF rates of about 2-3\%.}

\myparagraph{Results for Model Updates.} 
The results in \autoref{tab:ft_results} \edit{and \autoref{tab:imagenet_oneshot}} report the performance metrics for each model update.\footnote{\edit{We also estimate their standard deviation using the bootstrap method with 1,000 resamplings, and find that it is negligible, being approximately 0.6\% for the test error, 0.25\% for NFs, 1\% for the robust error, and 0.5\% for RNFs, for all methods, with minor variations.}} It is clear that \rcat provides a better trade-off between the reduction of NFs and that of RNFs \wrt the competing approaches in almost all cases. RCAT indeed achieves lower values of the sum of NFs and RNFs, while PCT and PCAT mostly reduce NFs by improving accuracy at the expense of compromising robustness. 
More specifically, PCT almost always entirely compromises robustness, as it is not designed to preserve it after update. 
A paradigmatic example can be found in row 5 of \autoref{tab:ft_results}, where \pct recovers almost 10\% of test error \wrt the \naive strategy, lowering the \anfs \edit{close} to $1\%$, but increasing the robust error by \edit{approximately} $43\%$ and the RNF rate by \edit{almost} $35\%$.
Similar trends are also reported for ImageNet models; \eg considering the model update $(I_4, I_5)$, \pct reduces the test error by almost 5\% while increasing the robust error and RNFs by \edit{approximately} $40\%$ and $35\%$, respectively. \edit{Let us finally remark that we also compute the number of samples for which no adversarial example was found before update, but become misclassified after update, contributing to both NFs and RNFs as discussed in \autoref{sect:regression.rnfs}. These \emph{joint} negative flips are always less than approximately 0.03\% for CIFAR-10 and 0.2\% for ImageNet, thus being negligible overall.}

\myparagraph{Measuring Performance Improvements ($\Delta$-metrics).} 
To better highlight the differences among PCT, PCAT, and RCAT, for each update \edit{($M_i$, $M_j$) (being $M_i$ and $M_j$ two generic CIFAR-10 or ImageNet models)}, we compute their performance improvements \wrt the baseline \naive strategy using the following four $\Delta$-metrics:
\begin{itemize}
    \item $\Delta$ Test Error (\%), \ie the difference between the Test Error obtained by replacing $M_i$ with $M_j$ (\naive strategy) and that obtained using PCT, PCAT, or RCAT;
    \item $\Delta$ Robust Error (\%), \ie the difference between the Robust Error obtained by the \naive strategy and that obtained using PCT, PCAT, or RCAT;
    \item $\Delta$ NFs (\%), \ie the difference between the NF rate obtained by the \naive strategy and that obtained using PCT, PCAT, or RCAT; and
    \item $\Delta$ RNFs (\%), \ie the difference between the RNF rate obtained by the \naive strategy and that obtained using PCT, PCAT, or RCAT.
\end{itemize}
Accordingly, \textit{positive} values of the $\Delta$-metrics represent an improvement \wrt \naive model replacement, \ie a reduction of the test error, the robust error, NFs, and RNFs.

\myparagraph{Analysis with $\Delta$-metrics.} \edit{The results are reported in 
\autoref{tab:mean_diff}}, along with the values of the $\Delta$-metrics averaged on the model updates listed in \autoref{tab:ft_results} \edit{and \autoref{tab:imagenet_oneshot}}.
\edit{PCT provides the largest improvement in accuracy for both CIFAR-10 and ImageNet models, and a good reduction of NFs, at the expense of significantly worsening adversarial robustness (-15.14\%/-22.50\% for CIFAR-10 and ImageNet, respectively) and RNFs (-11.05\%/-14.85\%).}
\edit{PCAT works slightly better for CIFAR-10 models, improving accuracy and NFs by +2.59\% and +1.12\% on average, but decreasing the robust error and RNF rate by -4.38\% and -2.16\% on average. For ImageNet models, PCAT performs similarly to the \naive update strategy.}
\edit{The proposed \rcat method, instead, finds a better accuracy-robustness trade-off; in particular, it only slightly improves accuracy (+1.62\%/+0.39\% on average) and NFs (+1\%/+0.55\% on average) without significantly affecting robustness, while improving RNFs (+0.18\%/+0.66\% on average). We also note that for ImageNet models, \rcat is the best performing in terms of NFs (+0.55\% on average).}\footnote{It is worth remarking here that, in general, the additional constraint of reducing NFs/RNFs in the considered update strategies may cause the test/robust error to increase (\eg RCAT slightly worsens the robust error, on average, while reducing the RNF rate). This may become more evident in the case of more accurate/robust models.}

\begin{table}
\centering
\caption{
Mean values of the $\Delta$-metrics, averaged on all \edit{CIFAR-10} (ImageNet) model updates in \autoref{tab:ft_results} \edit{(\autoref{tab:imagenet_oneshot})}. 
Positive values mean better results, \ie lower test/robust error and NF/RNF rate \wrt to the \naive strategy (\ie model replacement). 
\edit{For each $\Delta$-metric and dataset, we highlight in \textbf{bold} the highest improvement among the three methods.}
}
\label{tab:mean_diff}
\resizebox{0.99\columnwidth}{!}{
\begin{tabular}{l l c c c c c}
\toprule
 & & \textbf{$\Delta$ Test Err.} & \textbf{$\Delta$ NFs} & \textbf{$\Delta$ Rob. Err.} &  \textbf{$\Delta$ RNFs} \\
\midrule
\multirow{3}{*}{CIFAR-10}  
& \pct      &      \textbf{+4.62} &  \textbf{+1.58} &     -15.14 &  -11.05 \\
& \pctat    &      +2.59 &  +1.12 &      -4.38 &   -2.16 \\
& \mixmseat &      +1.62 &  +1.00 &      \textbf{-0.29} &    \textbf{+0.18} \\

\midrule
\multirow{3}{*}{ImageNet}  
& \pct      &      \textbf{+3.08} &  +0,30 &     -22.50 &  -14.85 \\
& \pctat    &      +0.01 &  -0.17 &      +0.62 &   -0.03 \\
& \mixmseat &      +0.39 &  \textbf{+0.55} &      \textbf{+1.61} &    \textbf{+0.66} \\

\bottomrule
\end{tabular}
}
\end{table}

\subsection{\edit{Reducing Regression over Sequential Updates}}
\label{sect:sequential}

\edit{We have considered so far a single-update setting in which the \textit{old} model is replaced by the \textit{new} one, using a given update method (being it \naive, \pct, \pctat, or \rcat). In this section, instead, we consider the case in which we perform multiple sequential updates to evaluate the behavior of the update strategies when used iteratively.
To this end, in \autoref{tab:imagenet_sequential} we report the results of \rcat, \pct, and \pctat when used to update the ImageNet models $I_1, \ldots I_5$ sequentially. In this experiment, we set $(\alpha, \beta) = (1, 2)$ for \pct and \pctat, and $(0.5, 0.4)$ for \rfat, and retain these values for all the sequential updates.}
\edit{PCT significantly reduces the test error while containing NFs across the four updates, but clearly worsens the robustness and RNFs. PCAT, relying on adversarial training, mitigates this issue, but still exhibits NF and RNF rates around 3\%. RCAT achieves the best trade-off again in this setting also, with comparable test and robust errors to those of PCAT, while reporting consistently lower NF and RNF rates of about 2\%. This confirms the capability of \rcat to also deal with the case of multiple sequential updates.}

\begin{table}
\centering
\caption{
\edit{Multiple sequential updates on ImageNet models using PCT, PCAT, and RCAT. We show the evolution of test error, NFs, robust error, and RNFs (in \%), when applying each method sequentially, \ie using the newly-trained model as a baseline for the next update.}
}
\label{tab:imagenet_sequential}
\setlength{\tabcolsep}{5pt}
\resizebox{0.97\columnwidth}{!}{
\begin{tabular}{l l c c c c}
\toprule
& & \textbf{Test Error} &  \textbf{NFs} &  \textbf{Robust Error} &  \textbf{RNFs} \\

\midrule 
\multirow{4}{*}{{{\pct}}}        
& $(I_1, I_2)$ & 35.50 &	2.28 &	74.30  &	6.14  \\
& $(I_2, I_3)$ & 28.90  &	3.16  &	84.08  &	18.32  \\
& $(I_3, I_4)$ & 22.88  &	2.68  &	69.72  &	15.30  \\
& $(I_4, I_5)$ & 19.64  &	2.84  &	70.28  &	27.22  \\

\midrule 
\multirow{4}{*}{{{\pctat}}}  
& $(I_1, I_2)$ & 38.62  &	2.58  &	65.12  &	2.20  \\
& $(I_2, I_3)$ & 33.36  &	4.76  &	60.56  &	3.98  \\
& $(I_3, I_4)$ & 25.78  &	2.96  &	44.42  &	1.94  \\
& $(I_4, I_5)$ & 21.86  &	3.32  &	40.18  &	3.44  \\

\midrule 
\multirow{4}{*}{{{\rcat}}}  
& $(I_1, I_2)$ & 37.98  &	1.84  &	64.76  &	1.96  \\
& $(I_2, I_3)$ & 31.00  &	2.30  &	55.64  &	1.58  \\
& $(I_3, I_4)$ & 23.86  &	2.30  &	43.28  &	1.66  \\
& $(I_4, I_5)$ & 21.48  &	2.42  &	39.22  &	2.02  \\

\bottomrule
\hline 
\end{tabular}
}
\end{table}

To summarize, performing model updates that retain high accuracy and robustness with low regression of both metrics is challenging. PCT tends to recover accuracy, at the expense of worsening robustness, as it was not originally designed to also deal with adversarial robustness. PCAT and RCAT find better accuracy-robustness trade-offs, with RCAT outperforming PCAT. In particular, RCAT provides comparable improvements in accuracy while preserving \edit{or even improving} robustness and, at the same time, minimizing the sum of NFs and RNFs, thereby effectively reducing regression of both accuracy and robustness.

\section{Related Work}
\label{sect:related}
We now discuss methodologies that are related to our work. 
\myparagraph{Continual Learning.} Our work has ties to the research field of \emph{continual learning} (CL)~\cite{chen2018lifelong,de2021continual}, which aims to continuously retrain machine-learning models to deal with new classes, tasks, and domains, while only slightly adapting the initial architecture.
However, it has been shown that such continuous updates render CL techniques sensitive to \textit{catastrophic forgetting}~\cite{kirkpatrick2017overcoming,toneva2018empirical}, \ie to forget previously-learned classes, tasks, or domains, while retaining good performance only on more recent data.
Several techniques have been proposed to mitigate this issue, including: \emph{replay methods}~\cite{ahn2021ss, chaudhry2019continual}, which replay representative samples from past data while retraining models on subsequent tasks; \textit{regularization terms}~\cite{ahn2019uncertainty, wang2021training}, which promote solutions with similar weights to those of the given model; and \textit{parameter isolation}~\cite{mallya2018packnet, serra2018overcoming}, which separates weights attributed to the different task to be learned.
However, as also explained in~\cite{yan2021positive}, we argue that quantifying the regression of models is not strictly connected to CL, as we are neither considering the inclusion of new tasks to learn nor the adaptation to the evolution of the data distribution.
Furthermore, we do not restrict our update policy to maintain the same architecture, but we permit its replacement with a better one in terms of both accuracy and robustness.

\myparagraph{Backward Compatibility.}
Our methodology can be included among the so-called \emph{backward-compatible}~\cite{zhao2022elodi, trauble2021backward, shen2020towards} learning approaches, which focus on providing updates of machine learning models that can interchangeably replace previous versions without suffering loss in accuracy.
This can be achieved by: (i) learning an invariant representation for newer and past data~\cite{shen2020towards}; using different weights when predicting specific samples~\cite{srivastava2020empirical}; and (iii) estimating which samples should be re-evaluated as their labels might be incorrect.
However, unlike \rfat, all these techniques only focus on accuracy, ignoring the side-effects on the regression of robustness, as the dramatic drop in performance we have observed when using \pct.

\section{Conclusions and Future Work}
\label{sect:concl}

Modern machine-learning systems demand frequent model updates to improve their average performance. To this end, such systems often exploit more powerful architectures and additional data to update the current models. However, it has been shown that model updates can induce a perceived regression of accuracy in the end users, as the new model may commit mistakes that the previous one did not make. The corresponding samples misclassified by the new model are referred to as \emph{negative flips} (\anfs).
In this work, we show that \anfs are not the sole regression that machine-learning models can face. Model updates can indeed also cause a significant regression of \textit{adversarial robustness}.
This means that, even if the average robustness of the updated model is higher, some adversarial examples that were not found for certain inputs against the previous model can be found against the new one. We refer to these samples as \emph{robustness negative flips} (\rnfs).
To address this issue, we propose a novel algorithm named \rfat, based on adversarial training, and theoretically show that our methodology provides a statistically-consistent estimator, without affecting the usual convergence rate of $O(\sfrac{1}{\sqrt{n}})$.
We empirically show the existence of \rnf while updating robust image classification models, and compare the performance of our \rfat approach with \pctat, \ie the adversarially-robust version of \pct.
The results highlight that \rfat better handles the regression of robustness, by reducing the number of \rnf and retaining or even improving the same performance as the previous model.

\edit{While we have not considered cases in which the data can also change over time, along with the models, we argue that our methodology can be readily applied also under these more challenging conditions. We will better investigate this aspect in future work, considering different application domains in which data quickly evolves over time, demanding frequent model updates, such as in the case of spam and malware detection. 
Within this context, we also plan to improve the proposed approach by studying the effect of different loss functions and regularizers, along with the investigation of better model selection methods and the implications of the no-free-lunch theorem~\cite{wolpert1996lack}, in particular, related to the trade-off between accuracy and robustness in non-stationary settings.}
To conclude, we firmly believe that this first work can set up a novel line of research, which will educate practitioners to evaluate and mitigate the different types of regression that might be faced when dealing with machine-learning model updates.

\section*{Acknowledgments}
This research has been partially supported by the Horizon Europe projects ELSA (grant agreement no. 101070617) 
and CoEvolution (grant agreement no. 101168560); 
by SERICS (PE00000014) and FAIR (PE00000013) under the MUR NRRP funded by the EU-NGEU; and by the EU-NGEU National Sustainable Mobility Center (CN00000023), MUR Decree n. 1033—17/06/2022 (Spoke 10). This work was conducted while Daniele Angioni was enrolled in the Italian National Doctorate on AI run by Sapienza University of Rome in collaboration with the University of Cagliari.



\begin{IEEEbiography}[{\includegraphics[width=1in,height=1.25in,clip,keepaspectratio]{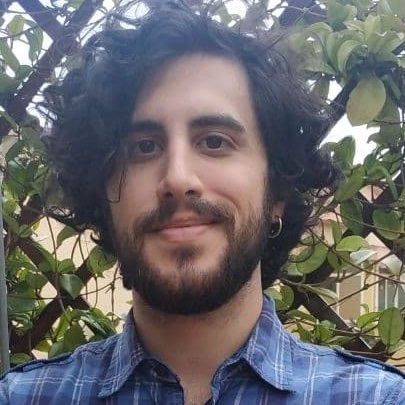}}]{Daniele Angioni} is a Postdoctoral Researcher at the University of Cagliari, Italy. He received his MSc degree in Electronic Engineering (with honors) in 2021, and his Ph.D. in Artificial Intelligence in January 2025 as part of the Italian National PhD Programme. His work focuses on machine learning security in real-world scenarios, including concept drift in malware detection, physical adversarial attacks on image classifiers, and performance regressions introduced by model updates. He serves as a reviewer for machine learning and cybersecurity journals (Pattern Recognition, Machine Learning, IEEE TIFS) and conferences (ESANN, AISec).
\end{IEEEbiography}

\begin{IEEEbiography}[{\includegraphics[width=1in,height=1.25in,clip,keepaspectratio]{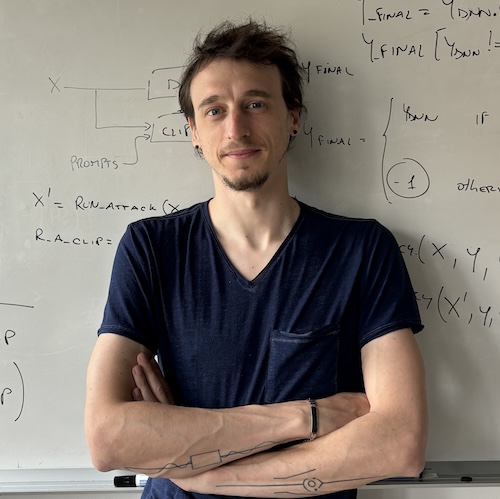}}]{Luca Demetrio}(MSc 2017, Ph.D. 2021) is Assistant Professor at the University of Genoa.
He serves as Associated Editor for Pattern Recognition, and as reviewer for top-tier conferences like USENIX, ICLR, NeurIPS and journals like IEEE TIFS and Computer \& Security.
He is currently studying the security of Windows malware detectors implemented with Machine Learning techniques, and he is first author of papers published in top-tier journals (ACM TOPS, IEEE TIFS).
\end{IEEEbiography}

\begin{IEEEbiography}[{\includegraphics[width=1in,height=1.25in,clip,keepaspectratio]{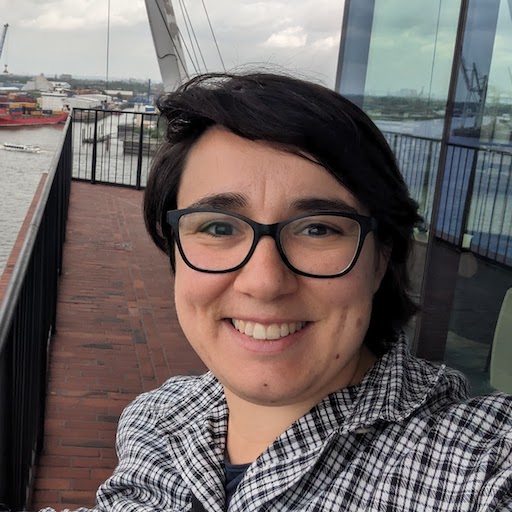}}]{Maura Pintor} is an Assistant Professor at the University of Cagliari, Italy. She received her PhD in Electronic and Computer Engineering (with honors) in 2022 from the University of Cagliari. Her research interests include adversarial machine learning and robustness evaluations of ML models, with applications in cybersecurity and computer vision. 
She serves as AC for NeurIPS, as AE for the journal Pattern Recognition, and as a PC member for several top-tier conferences (ACM CCS, ECCV, ICLR, ICCV, and for the journals IEEE TIFS, IEEE TIP, IEEE TDSC, IEEE TNNLS, ACM TOPS. She is co-chair of the ACM Workshop on Artificial Intelligence and Security (AISec), co-located with ACM CCS. She is a member of the Information Forensics and Security (IFS) Technical Committee
of the IEEE SPS.
\end{IEEEbiography}

\begin{IEEEbiography}[{\includegraphics[width=1in,height=1.25in,clip,keepaspectratio]{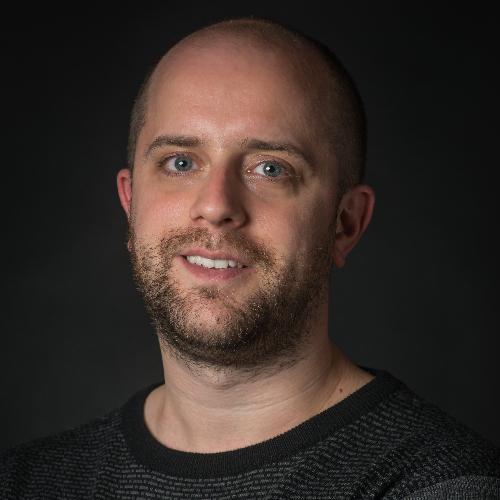}}]{Luca Oneto} was born in 1986 in Rapallo, Italy. He  completed his BSc and MSc in Electronic Engineering at the University of Genoa in 2008 and 2010, respectively. In 2014, he earned his PhD in Computer Engineering from the same institution. From 2014 to 2016, he worked as a Postdoc in Computer Engineering at the University of Genoa, where he then served as an Assistant Professor from 2016 to 2019. Luca co-founded the company ZenaByte s.r.l. in 2018. In 2019, he became an Associate Professor in Computer Science at the University of Pisa, and from 2019 to 2024, he held the position of Associate Professor in Computer Engineering at the University of Genoa. Currently, he is a Full Professor in Computer Engineering at the University of Genoa. He has been coordinator and local responsible in numerous industrial, H2020, and Horizon Europe projects. He has received prestigious recognitions, including the Amazon AWS Machine Learning Award and the Somalvico Award for the best young AI researcher in Italy. His primary research interests lie in Statistical Learning Theory and Trustworthy AI. Additionally, he focuses on data science, utilizing and improving cutting-edge machine learning and AI algorithms to tackle real-world problems.
\end{IEEEbiography}

\begin{IEEEbiography}[{\includegraphics[width=1in,height=1.25in,clip,keepaspectratio]{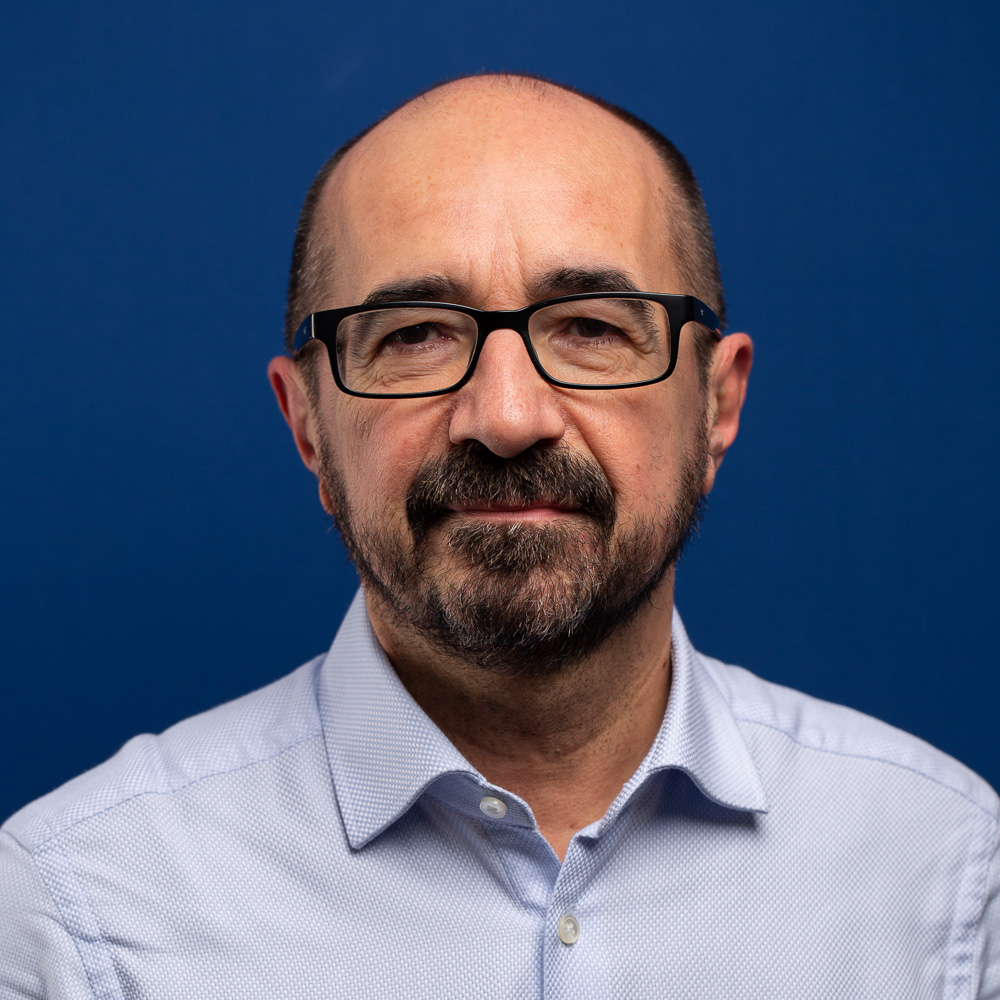}}]{Davide Anguita} received the ``Laurea'' degree in Electronic Engineering and a Ph.D. degree in Computer Science and Electronic Engineering from the University of Genoa, Italy, in 1989 and 1993, respectively. After working as a Research Associate at the International Computer Science Institute, Berkeley, CA, on special-purpose processors for neurocomputing, he returned to the University of Genoa. He is currently Full Professor of Computer Engineering with the Department of Informatics, BioEngineering, Robotics, and Systems Engineering (DIBRIS). His current research focuses on the theory and application of kernel methods and artificial neural networks.
\end{IEEEbiography}

\begin{IEEEbiography}[{\includegraphics[width=1in,height=1.25in,clip,keepaspectratio]{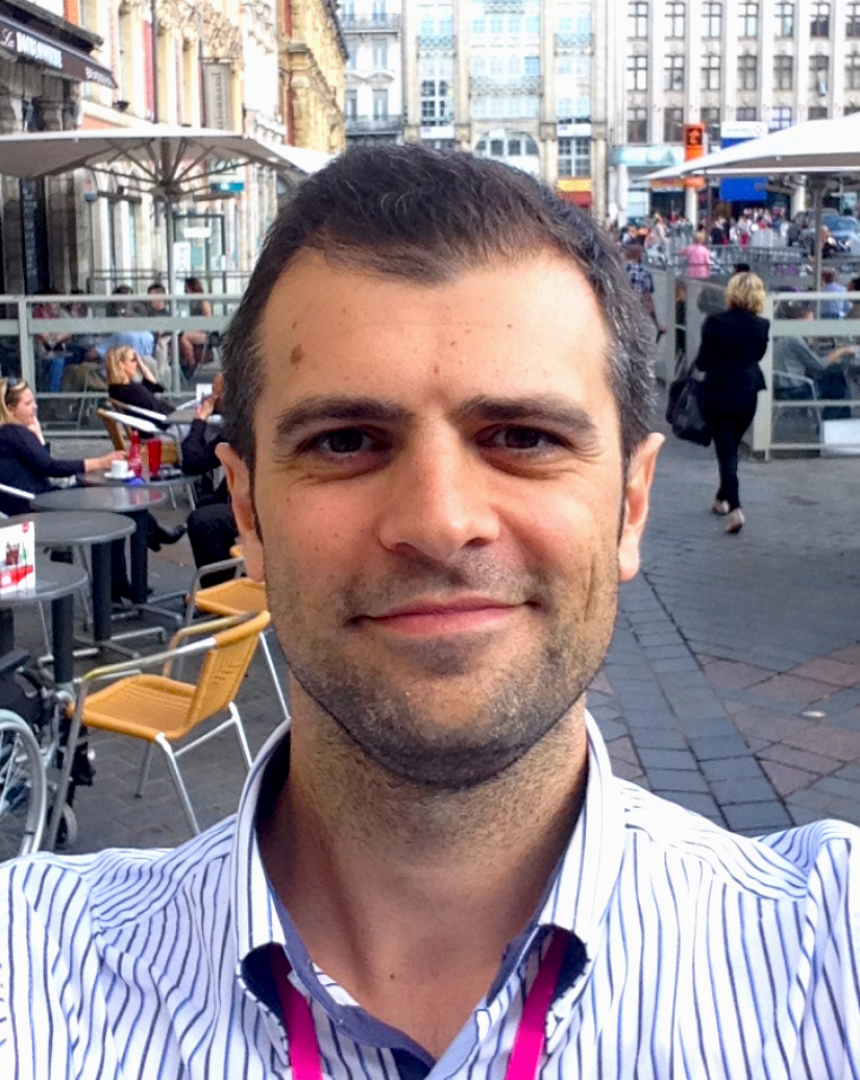}}]{Battista Biggio} (MSc 2006, PhD 2010) is a Full Professor of Computer Engineering at the University of Cagliari, Italy. He has provided pioneering contributions in machine learning security, playing a leading role in this field. His seminal paper ``Poisoning Attacks against Support Vector Machines'' won the prestigious 2022 ICML Test of Time Award. He has managed several research projects, and served as Area Chair for the IEEE Symposium on SP, and NeurIPS. He was IAPR TC1 Chair (2016-2020), and Associate Editor of IEEE TNNLS, IEEE CIM, and Elsevier Pattern Recognition (PRJ). He is now Associate Editor-in-Chief of PRJ. He is Fellow of IEEE and AAIA, Senior Member of ACM, and Member of AAAI, IAPR, and ELLIS. 
\end{IEEEbiography}

\begin{IEEEbiography}[{\includegraphics[width=1in,height=1.25in,clip,keepaspectratio]{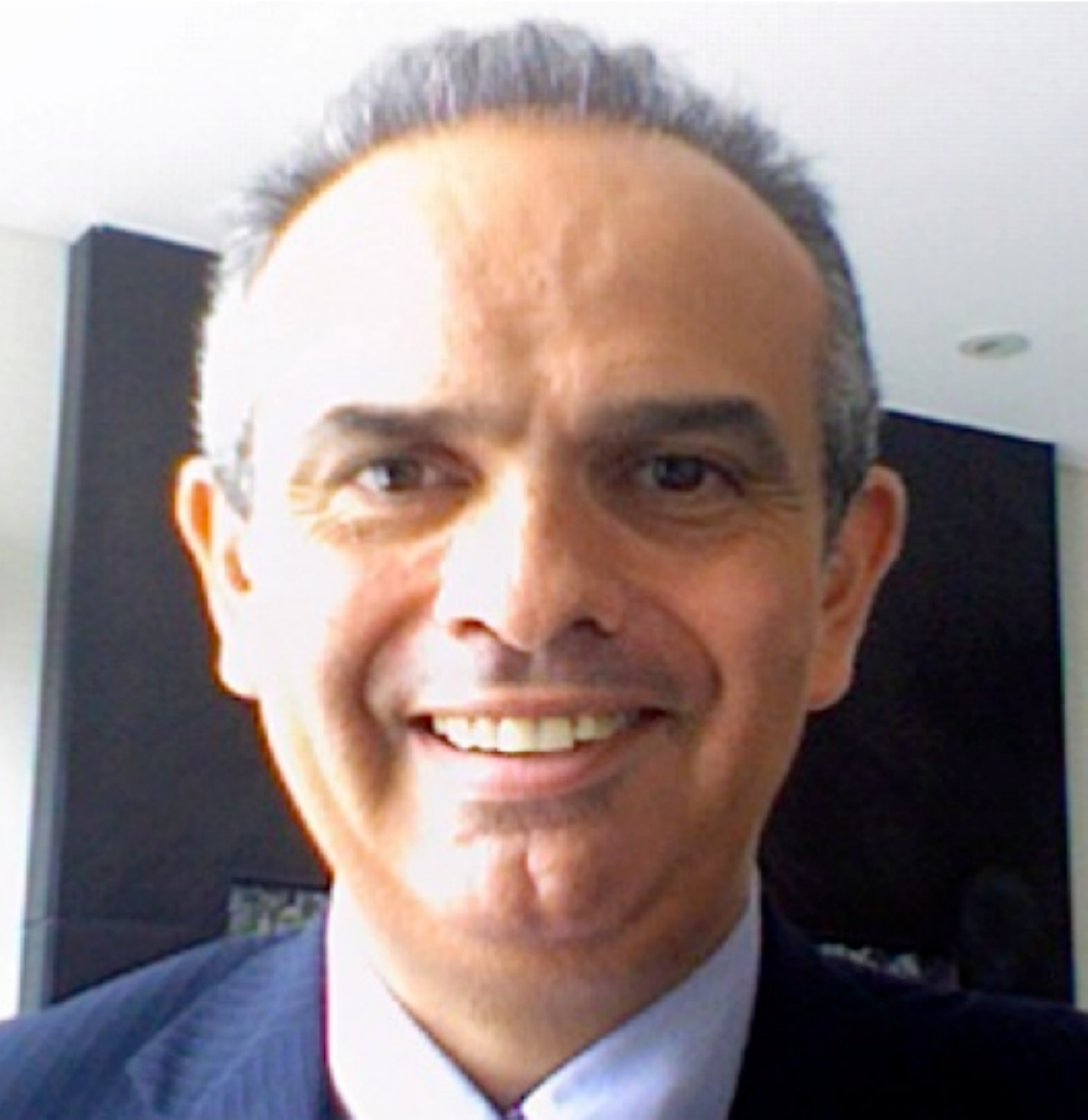}}]{Fabio Roli} is a Full Professor of Computer Engineering at the Universities of Genoa and Cagliari, Italy. He is Director of the sAIfer Lab, a joint lab between the Universities of Genoa and Cagliari on Safety and Security of AI. Fabio Roli’s research over the past thirty years has addressed the design of machine learning systems in the context of real security applications. He has provided seminal contributions to the fields of ensemble learning and adversarial machine learning and he has played a leading role in the establishment and advancement of these research themes. He is a recipient of the Pierre Devijver Award for his contributions to statistical pattern recognition. He has been appointed Fellow of the IEEE, Fellow of the International Association for Pattern Recognition, and Fellow of the Asia-Pacific Artificial Intelligence Association.
\end{IEEEbiography}

\end{document}